\theoremstyle{definition} 
\newcommand\bcmdtab{\noindent\bgroup\tabcolsep=0pt%
  \begin{tabular}{@{}p{10pc}@{}p{20pc}@{}}}
\newcommand\ecmdtab{\end{tabular}\egroup}
\newenvironment{customthm}[1]
  {\innercustomthm}
  {\endinnercustomthm}
  \title[Theory and Practice of Logic Programming]
        {Using Answer Set Programming for Commonsense Reasoning in the Winograd Schema Challenge}
  \author[Arpit Sharma]
         {ARPIT SHARMA\\
         Arizona State University, Tempe, USA\\
         \email{asharm73@asu.edu}}
\newtheorem{lemma}{Lemma}[section]
\newtheorem{definition}{Definition} 
\newtheorem{example}{Example} 
\newtheorem{theorem}{Theorem} 
\begin{document}


\maketitle
\begin{abstract}
 The Winograd Schema Challenge (WSC) is a natural language understanding task proposed as an alternative to the Turing test in 2011. In this work we attempt to solve WSC problems by reasoning with additional knowledge. By using an approach built on top of graph-subgraph isomorphism encoded using Answer Set Programming (ASP) we were able to handle 240 out of 291 WSC problems. The ASP encoding allows us to add additional constraints in an elaboration tolerant manner. In the process we present a graph based representation of WSC problems as well as relevant commonsense knowledge. This paper is under consideration for acceptance in TPLP
\end{abstract}

\begin{keywords}
Answer Set Programming, Winograd Schema Challenge, Commonsense Reasoning
\end{keywords}


\section{Introduction}
The Winograd Schema Challenge (WSC) \cite{levesque2011winograd} is a natural language understanding task. It is made up of special types of pronoun resolution problems. Each WSC problem consists of a sequence of sentences (currently 1-3) which contain a definite pronoun. A WSC problem also contains a binary question about the sentences such that the answer to the question provides the most natural resolution for the concerned pronoun. Additionally, two answer choices for the question are also provided. The answer choices are always present in the sentences. The goal in the WSC challenge is to determine the correct answer choice. Following is an example WSC problem. 
\vspace{-5pt}
\begin{framed}
\vspace{-5pt}
\noindent
\textbf{Sentences:} The fish ate the worm. \textbf{It$_{pronoun}$} was tasty\\\textbf{Question:} What was tasty? \textbf{Answer Choices:} a) fish b) worm
\vspace{-5pt}
\end{framed}
\vspace{-5pt}
A WSC problem also specifies an ``alternate word'' for a ``special word'' in the sentences. Replacing the ``special word'' by the ``alternate word'' changes the resolution of the pronoun. In the example above, the special word is \textit{tasty} and the alternate word is \textit{hungry}. Thus every schema represents a pair of coreference resolution problems that are almost identical but have different answers. Based on our analysis of how people solve the WSC problems, it suggests that for solving them a program would have to use relevant world knowledge. For example to solve the above question, the knowledge that \textit{`something that is eaten may be tasty'} is needed.

Earlier attempts to solve the challenge are mainly based on two different approaches. Works such as \cite{schuller2014tackling} and \citeN{bailey2015winograd} solve 8 and 72 WSC problems respectively by reasoning with the explicitly provided knowledge. Such works presented algorithms which take a WSC problem and a suitable knowledge as input and produce the solution of the problem. Other attempts however utilize the recent advancement in the field of neural language modelling. For example the language model in \citeN{radford2019language} correctly answered 193 out of 273 WSC problems by predicting the more plausible answer choice based on the support generated by a language model trained on large body of text. 

In this work we attempt to solve the WSC by reasoning with additional knowledge. We define an algorithm which is built on top of graph-subgraph isomorphism \cite{cordella2004sub}. By using an Answer Set Programming (ASP) \cite{baral2003knowledge,gelfond1988stable} based implementation of the algorithm, we were able to tackle 240 out of 291 WSC problems. The motivation behind using ASP is that we would like the process of adding new constraints to be easier. It plays an important part in the isomorphism detection step of the algorithm where the nodes in two graphs are paired based on a set of constraints. Adding new constraints in other high level languages such as python would take one to delve deep into the code to identify the actual place of injection whereas it can be easily accomplished in ASP by writing a new constraint anywhere in the code. The main contributions of this work are summarized below.
\begin{itemize}
	\item a graph based representations of WSC sentences and commonsense knowledge,
	\item Winograd Schema Challenge Reasoning (WiSCR) algorithm,
	\item an ASP implementation of the WiSCR algorithm, and
	\item an experimental evaluation of the implementation showing that it handles 240 out of 291 WSC problems. This is accomplished by performing three experiments, one of which involves an automatic approach to extract knowledge from text.
\end{itemize}

The rest of the paper is organized as follows. Sections 2 and 3 describe graphical representations of a WSC problem and a piece of knowledge. Section 4 details the reasoning algorithm and its ASP implementation. Section 5 presents the evaluation results of the ASP implementation. Section 6 provides the literature review. Finally Section 7 presents our conclusion.

\vspace{-10pt}
\section{Graphical Representation of a WSC Problem}
Graphical meaning representations are popular for natural languages such as English. It is because of their simplicity, readability and ability to be easily processed, that in the recent years there has been a significant amount of progress \cite{sharma2015identifying,banarescu2013abstract} in defining graphical representations for natural language and development of systems which can automatically parse a natural language text into those representations. Inspired by such representations, in this work we use a graphical schema to represent the sentences in a WSC problem, and a piece of knowledge.

In the following section, we define a graphical representation of a sequence of English sentences in a WSC problem. For that reason we define a set of tokens in a sequence of sentences, a POS (part-of-speech) tagging function which maps each token in a sequence of sentences to a POS tag, a class mapping function which maps each token in a sequence of sentences to its class (or type) and finally we define a graphical representation of a sequence of sentences by using a POS tagging function and a class mapping function. The nodes in the graphical representation are made up of the tokens in the sentences and the classes of the tokens. The edge labels in the graphical representation are from a set of binary relations between two nodes in the representation. 

\begin{definition}[\textbf{Set of Tokens in a Sequence of Sentences}]\label{def:tokens}
    Let $\mathcal{S}$ = ($\mathcal{S}_1$, $\mathcal{S}_2$, ..., $\mathcal{S}_n$), $n\geq 1$, be a sequence of English sentences, $\mathcal{W}_i$ be the sequence of words in the sentence $\mathcal{S}_i$ 
    and $\mathcal{W}_\mathcal{S}=\mathcal{W}_1 ^\frown \mathcal{W}_2 ^\frown ... ^\frown \mathcal{W}_n$ be the concatenation of the word sequences. Then the set of tokens $\mathbb{T}(\mathcal{S})$ is defined as follows:
    \begin{center}
        $\mathbb{T}(\mathcal{S})$ = \{$w\_i$ $\vert$ $w$ is the $i$th word in $\mathcal{W}_\mathcal{S}$\}
    \end{center}
\end{definition}

\begin{example}
Let us consider the sequence of English sentences $\mathcal{S}$ = (\textit{`The man could not lift his son because he was so weak.'}) where $\mathcal{S}$ contains only one sentence. Then,
\begin{center}
$\mathbb{T}(\mathcal{S})$ = $\{The\_1$, $man\_2$, $could\_3$, $not\_4$, $lift\_5$, $his\_6$, $son\_7$, $because\_8$, $he\_9$, $was\_10$, $so\_11$, $weak\_12\}$.
\end{center}
\end{example}

\begin{definition}[\textbf{A POS Tagging Function}]\label{def:pos}
    Let $\mathcal{S}$ be a sequence of one or more English sentences, $\mathbb{T}(\mathcal{S})$ be the set of tokens in  $\mathcal{S}$. Then, the POS (Part-Of-Speech) tagging function $f_{\mathcal{S}}^{pos}$ maps an element in $\mathbb{T}(\mathcal{S})$ to an element in the set $\{verb$, $noun$, $pronoun$, $adverb$, $adjective$, $other\}$, i.e., 
    \begin{center}
        $f_{\mathcal{S}}^{pos}$ : $\mathbb{T}(\mathcal{S})$ $\rightarrow$ $\{verb, noun, pronoun, adverb, adjective, other\}$
    \end{center}
\end{definition}

\begin{example}\label{ex:pos}
Let us consider the sequence of English sentences \textit{`The man could not lift his son because he was so weak.'} The set of tokens in the sequence is as shown in the Example 1. 
Then an example of a mapping produced by a POS tagging function is,

\begin{lstlisting}
$f_{\mathcal{S}}^{pos}(The\_1)$ = $other$
$f_{\mathcal{S}}^{pos}(man\_2)$ = $noun$ 
$f_{\mathcal{S}}^{pos}(could\_3)$ = $verb$
$f_{\mathcal{S}}^{pos}(not\_4)$ = $adverb$ 
$f_{\mathcal{S}}^{pos}(lift\_5)$ = $verb$
$f_{\mathcal{S}}^{pos}(his\_6)$ = $pronoun$ 
$f_{\mathcal{S}}^{pos}(son\_7)$ = $noun$
$f_{\mathcal{S}}^{pos}(because\_8)$ = $other$
$f_{\mathcal{S}}^{pos}(he\_9)$ = $pronoun$
$f_{\mathcal{S}}^{pos}(was\_10)$ = $verb$
$f_{\mathcal{S}}^{pos}(so\_11)$ = $adverb$
$f_{\mathcal{S}}^{pos}(weak\_12)$ = $adjective$
\end{lstlisting}

\end{example}

\begin{definition}[\textbf{A Class Mapping Function}]\label{def:class}
     Let $\mathcal{S}$ be a sequence of one or more English sentences, $\mathbb{T}(\mathcal{S})$ be the set of tokens in  $\mathcal{S}$. Then, the class mapping function $f_{\mathcal{S}}^{class}$ maps an element of $\mathbb{T}(\mathcal{S})$ to an element in a set $\mathbb{C}$, i.e., $f_{\mathcal{S}}^{class}:\mathbb{T}(\mathcal{S})\rightarrow \mathbb{C}$ where the set $\mathbb{C}$ is a union of three sets $\mathbb{C}_1$, $\mathbb{C}_2$ and \{$\phi$\} such that, 
    
    \begin{itemize}[wide, nosep, labelindent = 2pt, topsep = 0.5ex]
        \item $\mathbb{C}_1$  = \{$c$ $\vert$ $c$ is the lemmatized\footnote{\url{https://nlp.stanford.edu/IR-book/html/htmledition/stemming-and-lemmatization-1.html}, \url{https://www.thoughtco.com/what-is-base-word-forms-1689161}} form of $w$ where $w\_i\in \mathbb{T}(\mathcal{S})$ and $f_{\mathcal{S}}^{pos}(w\_i)$ $\in$ $\{verb$, $adverb$, $adjective\}$\}
        
        \item $\mathbb{C}_2$  = $\{object$, $person$, $group$, $location$, $quantity$, $shape$, $animal$, $plant$, $cognition$, $communication$, $event$, $feeling$, $act$, $motive$, $phenomenon$, $possession$, $process$, $relation$, $state$, $time\}$\footnote{Inspired from WordNet \cite{miller1995wordnet} lexicographer files \url{https://wordnet.princeton.edu/documentation/lexnames5wn}}
    \end{itemize}

    and,
    \[
      f_{\mathcal{S}}^{class}(x) =
      \begin{cases}
         c_1\in \mathbb{C}_1 & \text{if $f_{\mathcal{S}}^{pos}(x)\in\{verb, adjective, adverb\}$}\\
         c_2\in \mathbb{C}_2  & \text{if $f_{\mathcal{S}}^{pos}(x)\in\{noun, pronoun\}$}\\
        \phi        & \text{otherwise}
      \end{cases}
    \]
\end{definition}

\begin{example}\label{ex:class}
Let us consider the sequence of English sentences \textit{`The man could not lift his son because he was so weak.'} The set of tokens in the sequence is as shown in the Example 1. Also let a mapping produced by a POS tagging function is as shown in the Example \ref{ex:pos} above. Then an example of a mapping produced by a class mapping function is, 
\begin{lstlisting}
$f_{\mathcal{S}}^{class}(The\_1) = \phi$
$f_{\mathcal{S}}^{class}(man\_2) = person$
$f_{\mathcal{S}}^{class}(could\_3) = can$
$f_{\mathcal{S}}^{class}(not\_4) = not$
$f_{\mathcal{S}}^{class}(lift\_5) = lift$
$f_{\mathcal{S}}^{class}(his\_6) = person$
$f_{\mathcal{S}}^{class}(son\_7) = person$
$f_{\mathcal{S}}^{class}(because\_8) = \phi$
$f_{\mathcal{S}}^{class}(he\_9) = person$
$f_{\mathcal{S}}^{class}(was\_10) = be$
$f_{\mathcal{S}}^{class}(so\_11) = so$
$f_{\mathcal{S}}^{class}(weak\_12) = weak$
\end{lstlisting}
\end{example}

\begin{definition}[\textbf{A Formal Representation of a Sequence of One or More English Sentences}]\label{def:text} 
Let $\mathcal{S}$ be a sequence of English sentences, $\mathbb{T}(\mathcal{S})$ be a set of tokens in $\mathcal{S}$, $f_\mathcal{S}^{pos}$ be a POS tagging function and $f_\mathcal{S}^{class}$ be a class mapping function. Then, a formal representation of $\mathcal{S}$ is an edge labeled directed acyclic graph, $\mathcal{G}_\mathcal{S} = (\mathbb{V},\mathbb{E},f)$. The set of vertices $\mathbb{V}$, is a union of two disjoint sets $\mathbb{V}_1$ and $\mathbb{V}_2$, such that,
	
	\begin{itemize}[wide, nosep, labelindent = 0pt, topsep = 0ex]
		\item $\mathbb{V}_1$ = \{$w\_i$ $\vert$ $w\_i\in \mathbb{T}(\mathcal{S})$ and $f_\mathcal{S}^{pos}(w\_i)\in$\{$verb$, $adverb$, $adjective$, $noun$, $pronoun$\}\}
		
		\item $\mathbb{V}_2$ = \{$c$ $\vert$ $f_\mathcal{S}^{class}(w\_i)=c$ where $w\_i\in\mathbb{V}_1$\}
		
	\end{itemize}
    The nodes in $\mathbb{V}_1$ are called $instance$ nodes and the nodes in $\mathbb{V}_2$ are called $class$ nodes.
    
\noindent
    $\mathbb{E}\subseteq\mathbb{V}\times\mathbb{V}$, has following properties,
	\begin{itemize}[wide, nosep, labelindent = 0pt, topsep = 0ex]
	\item $\mathbb{E}$ is a union of the two disjoint sets $\mathbb{E}_1$ and $\mathbb{E}_2$,
	\item $(v_1,v_2)\in \mathbb{E}_1$ if $v_1\in \mathbb{V}_1$ and $v_2\in \mathbb{V}_1$, where $(v_1,v_2)$ represents a directed edge between the nodes $v_1$ and $v_2$,
	\item $(v_1,v_2)\in \mathbb{E}_2$ if $v_1\in \mathbb{V}_1$ and $v_2\in \mathbb{V}_2$, where $(v_1,v_2)$ represents a directed edge between the nodes $v_1$ and $v_2$,
	\item if $(v_1,v_2)\in \mathbb{E}_2$ then there does not exist $v\in\mathbb{V}_2$ such that $(v_1,v)\in \mathbb{E}_2$ where $v\neq v_2$. This means that $v_1$ has only one class node as its successor. This is because a concept can be of one type only  in this representation.
	\end{itemize}
\noindent
$f:\mathbb{E}\rightarrow\mathbb{L}\cup\{instance\_of\}$, is an edge labelling function where $\mathbb{L}$ is a set of binary relations between two nodes in $\mathbb{V}_1$ and \textit{instance\_of} is a binary relation between a node in $\mathbb{V}_1$ and a node in $\mathbb{V}_2$, i.e.
	\[
      f((v_1,v_2)) =
      \begin{cases}
        l\in \mathbb{L} & \text{if $(v_1,v_2)\in \mathbb{E}_1$}\\
        ``instance\_of" & \text{if $(v_1,v_2)\in \mathbb{E}_2$}\\
      \end{cases}
    \]
\end{definition}

\begin{example}
Let us consider the sequence of sentences \textit{`The man could not lift his son because he was so weak.'}, POS mapping shown in Example \ref{ex:pos} and class mapping shown in Example \ref{ex:class}. Then a representation of the sentences is shown in Figure \ref{fig:sent1}. All the edges labels other than \textit{instance\_of} part of a predefined set of binary relations between two nodes (as mentioned in the Definition \ref{def:text}). In this work, these relations are from the relations in a semantic parser called K-Parser \cite{sharma2015identifying}. 
\begin{figure}[ht]
	\centering
	\includegraphics[width=\textwidth]{./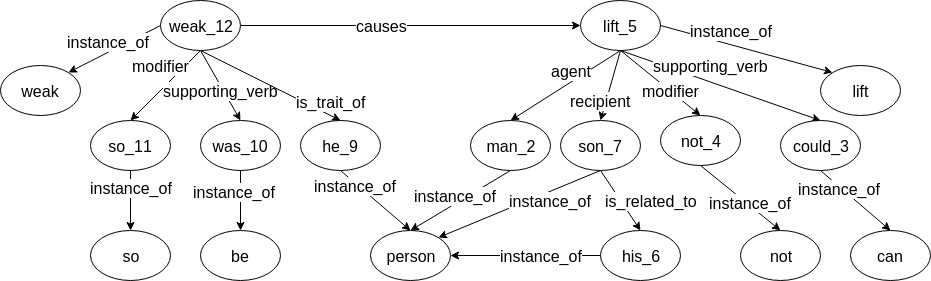}
	\caption{A Graphical Representation of Sequence of Sentences in a WSC Problem, \textit{``The man could not lift his son because he was so weak.''}}
	\label{fig:sent1}
	\vspace{-15pt}
\end{figure} 

\end{example}

\section{Graphical Representation of a Piece of Knowledge}
The WSC corpus was created in a way that each problem in it requires an additional knowledge. Let us consider the following WSC example.
\vspace{-5pt}
\begin{framed}
\noindent
\textbf{Sentence:} The man could not lift his son because \textbf{he$_{pronoun}$} was so weak.
\\\textbf{Question:} Who was weak? \textbf{Answer Choices:} a) man b) son
\end{framed}
\vspace{-5pt}

\noindent
The above problem can be correctly solved by using the commonsense knowledge that,
\textit{``someone being weak prevents her to lift someone else''}. 
This knowledge can be written as, 
    \textit{``\textbf{if} person1 can not lift someone because person2 is weak 
    \textbf{then} person1 
    \textbf{is same as} person2''}. 
Intuitively, it means that if \textit{person2} being weak prevents \textit{person1} from lifting something then \textit{person1} is same as \textit{person2}. Such a knowledge is made up of two parts. The first part is an \textit{if-condition}, which consists of an English sentence, i.e., \textit{`person1 can not lift someone because person2 is weak'}. The second part of the knowledge is the consequent of the \textit{if-condition}. The consequent is always an \textit{`is same as'} commutative relationship between two words (e.g., $person1$ and $person2$ above) in the sentence. Such a knowledge and its graphical representation are formally defined below.

\begin{definition}[\textbf{A Piece of Knowledge}]\label{def:know}
A piece of knowledge $\mathcal{K}$ is a statement of the form `\textbf{IF} $\mathcal{S}$ \textbf{THEN} $x$ \textbf{is same as} $y$' where $\mathcal{S}$ is an English sentence, $\mathbb{T}(\mathcal{S})$ is a set of tokens in $\mathcal{S}$, $x,y \in \mathbb{T}(S)$, $f_{\mathcal{S}}^{pos}(x)=noun$ and $f_{\mathcal{S}}^{pos}(y)=noun$, where $f_{\mathcal{S}}^{pos}$ is a POS tagging function.
\end{definition}

\begin{example}
An example of a piece of knowledge is,
    \textbf{IF} \textit{`person1 can not lift someone because person2 is weak'} \textbf{THEN} \textit{person1\_1} \textbf{is same as} \textit{person2\_7}. 
\end{example}

\begin{definition}[\textbf{A Graphical Representation of a Piece of Knowledge}]\label{def:know_rep}
Let $\mathcal{K}$ = `\textbf{IF} $\mathcal{S}$ \textbf{THEN} $x$ \textbf{is same as} $y$' be a piece of knowledge 
where $\mathcal{S}$ is an English sentence, $x$ and $y$ are tokens in $\mathcal{S}$ and 
$\mathcal{G}_\mathcal{S} = (\mathbb{V}_\mathcal{S},\mathbb{E}_\mathcal{S},f_\mathcal{S})$ be a graphical representation of $\mathcal{S}$. 
Then, a graphical representation of $\mathcal{K}$ is an edge labeled directed graph $\mathcal{G_K} = (\mathbb{V}_\mathcal{K},\mathbb{E}_\mathcal{K},f_\mathcal{K})$, such that,

\begin{itemize}[wide, nosep, labelindent = 0pt, topsep = 0ex]
    \item $\mathbb{V}_\mathcal{K}$ = $\mathbb{V}_\mathcal{S}$,
    
    
    \item $\mathbb{E}_\mathcal{K}$ = $\mathbb{E}_\mathcal{S} \bigcup \{(x,y), (y,x)\}$, and
    
    \item 
    \[
      f_\mathcal{K}((v_1,v_2)) =
      \begin{cases}
        f_\mathcal{S}((v_1,v_2)) & \text{if $(v_1,v_2)\in \mathbb{E}_\mathcal{S}$} \\
        ``is\_same\_as" & \text{Otherwise}\\
      \end{cases}
    \]
   
\end{itemize}
Here, we say that $f_\mathcal{K}$ is defined using $f_\mathcal{S}$.
\end{definition}

\begin{example}
An example of a representation of a piece of knowledge is  shown in Figure \ref{fig:ch3_know_ex}.

\begin{figure}[ht]
  \centering
      \includegraphics[width=\textwidth]{./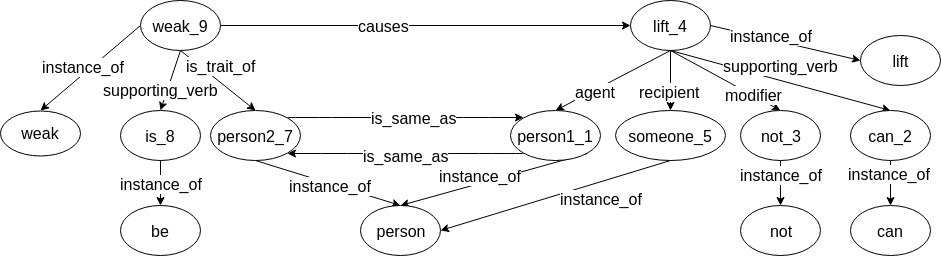}
  \caption{Graphical Representation of the Knowledge, ``\textbf{IF} \textit{person1 can not lift someone because person2 is weak} \textbf{THEN} \textit{person1\_1 is same as person2\_7}''}
  \label{fig:ch3_know_ex}
  \vspace{-12pt}
\end{figure}

\end{example}

\vspace{-10pt}
\section{Reasoning with Commonsense Knowledge}
\vspace{-5pt}
In this work we defined a reasoning algorithm for solving the WSC problems. 
The algorithm takes graphical representations of a WSC problem and a piece of knowledge as input and outputs the answer of the WSC problem if it is inferred from the inputs. 
As per the problem definition the correct answer provides the `most natural resolution' for the pronoun in the WSC sentences. In the following two definitions we formally defined the `most natural resolution' and the answer of a WSC problem with respect to the graphical representations of a WSC problem and a piece of knowledge needed to answer it.

\begin{definition}[\textbf{Most Natural Resolution}]\label{def:nat_res}
Let $\mathcal{S}$ be a sequence of sentences in a WSC problem, $\mathcal{G_S}$ = ($\mathbb{V}_\mathcal{S}$,$\mathbb{E}_\mathcal{S}$,$f_\mathcal{S}$) be a graphical representation of $\mathcal{S}$, $\mathcal{G_S'}$ = ($\mathbb{V}_\mathcal{S}'$,$\mathbb{E}_\mathcal{S}'$,$f_\mathcal{S}'$) be a subgraph of $\mathcal{G_S}$ such that $\mathbb{V}_\mathcal{S}'=\mathbb{V}_\mathcal{S}-\mathbb{V}_\mathcal{S}^c$ where $\mathbb{V}_\mathcal{S}^c$ is the set of all the class nodes in $\mathcal{G_S}$, $f_\mathcal{S}'=f_\mathcal{S}$ and $\mathbb{E}_\mathcal{S}'=\mathbb{E}_\mathcal{S}-\mathbb{E}_\mathcal{S}^c$ where $e\in \mathbb{E}_\mathcal{S}^c$ iff $f_\mathcal{S}(e)=``instance\_of"$. Let $\mathcal{G_K}$ = ($\mathbb{V}_\mathcal{K}$,$\mathbb{E}_\mathcal{K}$,$f_\mathcal{K}$) be a graphical representation of a piece of knowledge where $f_\mathcal{K}$ is defined using $f_\mathcal{S}$, $\mathcal{G_K'}$ = ($\mathbb{V}_\mathcal{K}'$,$\mathbb{E}_\mathcal{K}'$,$f_\mathcal{K}'$) be a subgraph of $\mathcal{G_K}$ such that $\mathbb{V}_\mathcal{K}'=\mathbb{V}_\mathcal{K}-\mathbb{V}_\mathcal{K}^c$ where $\mathbb{V}_\mathcal{K}^c$ is the set of all the class nodes in $\mathcal{G_K}$, $f_\mathcal{K}'=f_\mathcal{K}$ and $\mathbb{E}_\mathcal{K}' = \mathbb{E}_\mathcal{K}-\mathbb{E}_\mathcal{K}^c$ where $e\in \mathbb{E}_\mathcal{K}^c$ iff $f_\mathcal{K}(e) \in \{is\_same\_as,instance\_of\}$. Also, let $\mathbb{M}$ be a set of pairs of the form ($a$,$b$) such that either all of the below conditions are satisfied or $\mathbb{M}=\emptyset$.
\begin{itemize}[wide, nosep, labelindent = 0pt, topsep = 0ex]
    \item $a\in \mathbb{V}_\mathcal{S}'$ and $b\in \mathbb{V}_\mathcal{K}'$, 
    
    \item $a$ and $b$ are instances of same class, i.e., $(a,i)\in \mathbb{E}_\mathcal{S}$, $(b,i)\in \mathbb{E}_\mathcal{K}$, $f_\mathcal{S}((a,i))=instance\_of$ and $f_\mathcal{K}((b,i))=instance\_of$
    
    \item if for every pair ($a$,$b$)$\in$ $\mathbb{M}$, $a$ is replaced by $b$ in $\mathbb{V}_\mathcal{S}'$ then $\mathcal{G_K'}$ becomes a  
    subgraph of the node replaced $\mathcal{G_S'}$
\end{itemize}

\noindent
Then we say that $x\in\mathbb{V}_\mathcal{S}'$ provides the \textbf{`most natural resolution'} for $y\in \mathbb{V}_\mathcal{S}'$ if ($x$,$n_1$)$\in \mathbb{M}$, ($y$,$n_2$)$\in \mathbb{M}$ and either one of the following is true
\begin{itemize}[wide, nosep, labelindent = 0pt, topsep = 0ex]
    \item $(n_1,n_2)\in \mathbb{E}_\mathcal{K}$ and $f_\mathcal{K}((n_1,n_2))=is\_same\_as$
    \item $(n_2,n_1)\in \mathbb{E}_\mathcal{K}$ and $f_\mathcal{K}((n_2,n_1))=is\_same\_as$
\end{itemize}

\end{definition}

\begin{example}
Let us consider the representation of a piece of knowledge shown in the Figure \ref{fig:ch3_know_ex}, the representation of the sentences in a WSC problem as shown in the Figure \ref{fig:sent1}. Then, according to the Definition \ref{def:nat_res}, following is the value of the set of
node pairs (i.e., $\mathbb{M}$). 

    $\mathbb{M}$ = \{(\textit{weak\_12}, \textit{weak\_9}), (\textit{lift\_5}, \textit{lifts\_4}), (\textit{he\_9}, \textit{person2\_7}),(\textit{man\_2}, \textit{person1\_1}, (\textit{son\_7}, \textit{someone\_5}), (\textit{was\-\_10}, \textit{is\_8}), (\textit{not\_4}, \textit{not\_3}), (\textit{could\_3}, \textit{can\_2})\}. 
We can see that (\textit{he\_9}, \textit{person2\_7})$\in\mathbb{M}$ and (\textit{man\_2}, \textit{person1\_1})$\in\mathbb{M}$, $(person1\_1,person2\_7)$ is an edge in the graphical representation of the knowledge with label $is\_same\_as$. Then, according to the Definition \ref{def:nat_res} the \textbf{`most natural resolution'} for \textbf{\textit{he\_9}} is \textbf{\textit{man\_2}}.
\end{example}

\begin{definition}[\textbf{Answer of a WSC Problem}] \label{def:ans_def}
    Let $\mathcal{S}$ be a sequence of sentences in a WSC problem $\mathcal{P}$, $\mathbb{T}(S)$ be the set of tokens in $\mathcal{S}$, $p\in \mathbb{T}(S)$ be the token which represents the pronoun to be resolved, $a_1,a_2\in \mathbb{T}(S)$ be two tokens which represent the two answer choices,
    $\mathcal{G_S}=(\mathbb{V}_\mathcal{S},\mathbb{E}_\mathcal{S},f_\mathcal{S})$ be a graphical representation of $\mathcal{S}$, 
    and $\mathcal{G_K}=(\mathbb{V}_\mathcal{K},\mathbb{E}_\mathcal{K},f_\mathcal{K})$ be a graphical representation of a piece of knowledge such that $f_\mathcal{K}$ is defined using $f_\mathcal{S}$. Then,
    \begin{itemize}[wide, nosep, labelindent = 0pt, topsep = 0ex]
        \item $a_1$ is the answer of $\mathcal{P}$, if only $a_1$ provides the `most natural resolution' for $p$,
    
        \item $a_2$ is the answer of $\mathcal{P}$, if only $a_2$ provides the `most natural resolution' for $p$,
        
        \item no answer otherwise
    
    \end{itemize}
\end{definition}

\begin{example}
Let us consider the representation of a piece of knowledge from Figure \ref{fig:ch3_know_ex}, the representation of WSC sentences from Figure \ref{fig:sent1}, the token for pronoun to resolve is \textit{`he\_9'}, the tokens for answer choices are \textit{`man\_2'} and \textit{`son\_8'}. Then according to the Definition \ref{def:nat_res}, only \textit{`man\_2'} provides the `most natural resolution' for \textit{`he\_9'}. Hence, according to the Definition \ref{def:ans_def} \textbf{\textit{`man\_2'}} is the answer of the WSC problem.
\end{example}

\vspace{-10pt}
\subsection{Winograd Schema Challenge Reasoning (WiSCR) Algorithm} \label{sec:reas_algo}
\vspace{-5pt}

\noindent
\textbf{Input to the Algorithm: }a graphical representation, $\mathcal{G_S} = (\mathbb{V}_\mathcal{S},\mathbb{E}_\mathcal{S})$, of the sentences in a WSC problem (By Definition \ref{def:text}), a node $p$ in $\mathcal{G_S}$ which represents the pronoun to be resolved, two nodes $a_1$ and $a_2$ in $\mathcal{G_S}$ which represent the two answer choices for the WSC problem, and a graphical representation, $\mathcal{G_K} = (\mathbb{V}_\mathcal{K},\mathbb{E}_\mathcal{K})$, of a commonsense knowledge (By Definition \ref{def:know_rep}).

\noindent
\textbf{Output of the Algorithm: }The algorithm outputs $a_1$, $a_2$ or it does not output any answer.

\noindent
\textbf{Behavior of the Algorithm: }\\
\textbf{STEP 1: }In this step a subgraph of $\mathcal{G_S}$ is extracted. Let the extracted subgraph be named $\mathcal{G_S}'$. $\mathcal{G_S}'$ contains all the nodes which are not class nodes in $\mathcal{G_S}$. All the edges which connect such nodes are also extracted. An example of the output of the Step 1 is shown in the Figure \ref{fig:ex_algo_step1}. The entire graph is the representation of the sentences in a WSC problem, and the highlighted part of the graph represents the subgraph extracted in this step.

\begin{figure}[ht]
  \centering
      \includegraphics[width=0.9\textwidth]{./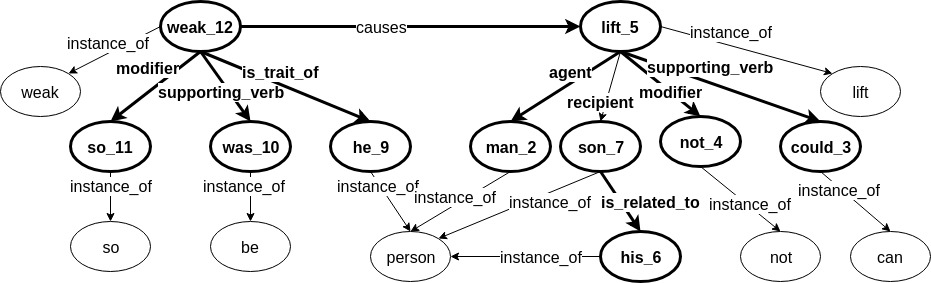}
  \caption{An Example of Step 1 Output of the WiSCR Algorithm with Respect to the WSC Sentence \textit{``The man could not lift his son because he was so weak.''}}
  \label{fig:ex_algo_step1}
\end{figure}

\noindent
\textbf{STEP 2: }In this step a subgraph of $\mathcal{G_K}$ is extracted. Let the extracted subgraph be named $\mathcal{G_K}'$. $\mathcal{G_K}'$ contains all the nodes from $\mathcal{G_K}$ which are not class nodes and it contains all the edges which connect such nodes, except the edges which are labeled as \textit{`is\_same\_as'}. An example of the output of the Step 2 is shown in the Figure \ref{fig:ex_algo_step2}. The entire graph in the figure is the representation of a piece of knowledge (as shown in Figure \ref{fig:ch3_know_ex}) and the highlighted part of the graph is the subgraph extracted in this step.

\begin{figure}[ht]
  \centering
      \includegraphics[width=\textwidth]{./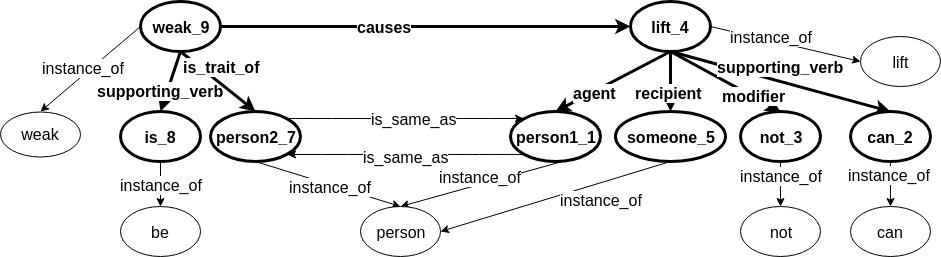}
  \caption{Example of Step 2 Output of the WiSCR Algorithm}
  \label{fig:ex_algo_step2}
  \vspace{-15pt}
\end{figure}

\noindent
\textbf{STEP 3: }In this step, all possible graph-subgraph isomorphisms \cite{cordella2004sub} are detected between $\mathcal{G_S}'$ and $\mathcal{G_K}'$ (the subgraphs from the previous two steps respectively). A graph-subgraph isomorphism is a mapping (say $\mathbb{M}$) between two graphs ($\mathcal{G_S}'$ and $\mathcal{G_K}'$) such that $\mathbb{M}$ is a set of pairs of the form $(x,y)$ where $x$ is a node in $\mathcal{G_S}'$, $y$ is a node in $\mathcal{G_K}'$, and if for every $(x,y)\in\mathbb{M}$, $x$ is replaced by $y$ then $\mathcal{G_K}'$ becomes a subgraph of the node replaced $\mathcal{G_S}'$. 
If such a mapping does not exist then $\mathbb{M}=\emptyset$. An important constraint that we put on the mapping set is that for each $(x,y)\in\mathbb{M}$, both $x$ and $y$ must be instances of same class. This is because our assumption for a correct knowledge is that it represents a scenario which is similar to the sentences in the concerned WSC problem. For example if a WSC sentence mentions about \textit{`lift'} action with the help of the word \textit{`lifting'} then a suitable knowledge must also mention about \textit{`lift'} action. It does not matter which form of a word (e.g., \textit{`lifting'} or \textit{`lifts'}) is used in the knowledge or the WSC sentences. This information is captured by the class nodes in the graphical representations.

\noindent
\textbf{STEP 4: }In this step an answer to a WSC problem is deduced from the input representations and the results of the previous steps of this algorithm. For each of the graph-isomorphism detected in Step 3, an answer to the input WSC problem is extracted by using the following rules.
 
\noindent
$\bullet$ The answer choice $a_1$ is an answer with respect to the set $\mathbb{M}$ if 
    $(p,n_1)\in\mathbb{M}$, 
    $(a_1,n_2)\in\mathbb{M}$, 
    either $(n_1,n_2)$ or $(n_2,n_1)$ is a directed edge in $\mathcal{G_K}$ and it is labeled as \textit{`is\_same\_as'}, and there does not exist an $n$ and an $x$ such that $(x,n)\in\mathbb{M}$ and either $(n_1,n)$ or $(n,n_1)$ is an edge in $\mathcal{G_K}$ labeled as \textit{`is\_same\_as'}

\noindent
$\bullet$ The answer choice $a_2$ is an answer with respect to the set $\mathbb{M}$ if 
    $(p,n_1)\in\mathbb{M}$, 
    $(a_2,n_2)\in\mathbb{M}$, 
    either $(n_1,n_2)$ or $(n_2,n_1)$ is a directed edge in $\mathcal{G_K}$ and it is labeled as \textit{`is\_same\_as'}, and there does not exist an $n$ and an $x$ such that $(x,n)\in\mathbb{M}$ and either $(n_1,n)$ or $(n,n_1)$ is an edge in $\mathcal{G_K}$ labeled as \textit{`is\_same\_as'}

\noindent
$\bullet$ Otherwise the input WSC problem does not have an answer with respect to the set $\mathbb{M}$

Finally, after processing all the isomorphisms, if $a_1$ is the only answer retrieved then $a_1$ is the final answer. If $a_2$ is the only answer retrieved then $a_2$ is the final answer. Otherwise the algorithm does not ouput an answer.

\vspace{-2pt}
\begin{theorem}\label{theorem:1}
Let $\mathcal{S}$ be a sequence of sentences in a WSC problem $\mathcal{P}$, $\mathcal{G_S}=(\mathbb{V}_\mathcal{S},\mathbb{E}_\mathcal{S},f_\mathcal{S})$ be a graphical representation of $\mathcal{S}$, $p$ be a node in $\mathcal{G_S}$ such that it represents the pronoun to be resolved in $\mathcal{P}$, $a_1$ and $a_2$ be two nodes in $\mathcal{G_S}$ such that they represent the two answer choices for $\mathcal{P}$, and $\mathcal{G_K}=(\mathbb{V}_\mathcal{K},\mathbb{E}_\mathcal{K},f_\mathcal{K})$ be a graphical representation of a piece of knowledge such that $f_\mathcal{K}$ is defined using $f_\mathcal{S}$. Then, the Winograd Schema Challenge Reasoning (WiSCR) algorithm outputs,

\begin{itemize}[wide, nosep, labelindent = 0pt, topsep = 0ex]
    \item $a_1$ as the answer of $\mathcal{P}$, if only $a_1$ provides the 'most natural resolution' (By Definition \ref{def:nat_res}) for $p$ in $\mathcal{G_S}$,
    
    \item $a_2$ as the answer of $\mathcal{P}$, if only $a_2$ provides the `most natural resolution' for $p$ in $\mathcal{G_S}$,
    
    \item no answer otherwise.
\end{itemize}
\end{theorem}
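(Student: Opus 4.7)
The plan is to argue, in three reductions, that the WiSCR algorithm is a faithful operationalization of Definitions~\ref{def:nat_res} and \ref{def:ans_def}. First, I would establish that the subgraphs $\mathcal{G_S}'$ and $\mathcal{G_K}'$ constructed in Steps~1 and 2 are precisely those appearing in Definition~\ref{def:nat_res}. For Step~1 this is immediate: the algorithm removes exactly the class nodes and (being the only edges incident to them) the instance\_of edges, matching the prescription of $\mathbb{V}_\mathcal{S}^c$ and $\mathbb{E}_\mathcal{S}^c$. The same holds for Step~2, with the additional removal of is\_same\_as edges matching the stronger condition on $\mathbb{E}_\mathcal{K}^c$ in the definition.

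Next, I would show that the set of mappings $\mathbb{M}$ enumerated in Step~3 coincides with the admissible sets from Definition~\ref{def:nat_res}. The same-class-instance restriction imposed explicitly by the algorithm captures the second bullet; the standard notion of graph-subgraph isomorphism from \cite{cordella2004sub} yields the third bullet (the node-replaced-subgraph property); and the fact that the mappings range over nodes of $\mathcal{G_S}'$ and $\mathcal{G_K}'$ gives the first. Conversely, any $\mathbb{M}$ meeting Definition~\ref{def:nat_res} is by construction a graph-subgraph isomorphism that Step~3 will enumerate.

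Then, I would match the two rules of Step~4 to the "most natural resolution" criterion in Definition~\ref{def:nat_res}: the conditions $(p,n_1),(a_i,n_2)\in\mathbb{M}$ together with an is\_same\_as edge (in either direction) between $n_1$ and $n_2$ are exactly those of the definition. Reading the final aggregation as "$a_1$ is returned iff across all enumerated $\mathbb{M}$'s only $a_1$ was flagged", the three cases of Theorem~\ref{theorem:1} then follow from the three cases of Definition~\ref{def:ans_def} by case analysis on the output combinations (only $a_1$, only $a_2$, both flagged, or neither flagged), using the correspondences established above.

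The main obstacle I anticipate is the careful bookkeeping around the uniqueness clauses: the non-existence clause inside each Step~4 rule (intended to prevent the pronoun's image from having multiple is\_same\_as partners across isomorphisms) and the final "only $a_1$" aggregation must be shown to jointly implement the "only" qualifier in Definition~\ref{def:ans_def}, ruling out degenerate cases where distinct isomorphisms flag distinct answer choices and the algorithm must still report no answer.
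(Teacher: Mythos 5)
Your proposal follows essentially the same route as the paper's proof: the paper isolates your first two reductions as appendix lemmas (Steps 1--2 produce exactly the subgraphs $\mathcal{G_S}'$ and $\mathcal{G_K}'$ of Definition~\ref{def:nat_res}, Step 3 enumerates exactly the admissible mapping sets $\mathbb{M}_i$) and then, as you do, matches the Step-4 rules and the final aggregation to the ``only $a_i$ provides the most natural resolution'' clauses of Definitions~\ref{def:nat_res} and \ref{def:ans_def}. The one point worth tightening is your reading of the non-existence clause: it operates \emph{within} a single isomorphism $\mathbb{M}$ (forbidding the pronoun's image $n_1$ from having an is\_same\_as partner matched to some $x$ other than the chosen answer), while the cross-isomorphism ``only'' is enforced solely by the final aggregation --- a bookkeeping distinction the paper's own proof also passes over lightly.
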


\subsection{Implementation of the WiSCR Algorithm}
\vspace{-5pt}
There are various constraints imposed on the two input graphs in the WiSCR algorithm to retrieve the final answer. For example, in Step 3 a constraint that both the nodes in a pair belonging to an isomorphism set must be instances of the same class node. Considering that, our main motivation of using ASP to implement the WiSCR algorithm is to make the process of adding new constraints easier. In this section, first we present the details of the ASP encoding of the inputs to WiSCR algorithm and an ASP\ implementation of the WiSCR algorithm. Then we show, with the help of examples, how the current implementation can be easily updated to include new constraints.

\subsubsection{ASP encoding of Inputs} 
\vspace{-5pt}
There are four inputs to the algorithm, a sequence of sentences in a WSC problem, a pronoun to be resolved, two answer choices and a piece of knowledge. The WSC sentences are represented as a graph. Each edge in the graph is encoded in the ASP format by using a ternary predicate \texttt{has\_s(h,l,t)}, where \texttt{h} and \texttt{t} are two nodes and \texttt{l} is an edge label of the directed edge from \texttt{h} to \texttt{t}. Similarly, a piece of knowledge is represented as a graph. It is encoded in ASP by using a ternary predicate \texttt{has\_k(h$_1$,l$_1$,t$_1$)}, where \texttt{h$_1$} and \texttt{t$_1$} are two nodes and \texttt{l$_1$} is an edge label of the directed edge from \texttt{h$_1$} to \texttt{t$_1$}. The pronoun is encoded in ASP by using a unary predicate \texttt{pronoun(p)} where \texttt{p} is the pronoun. Similarly, the two answer choices are encoded by using the unary predicates \texttt{ans\_ch1(a1)} and \texttt{ans\_ch2(a2)}, respectively.

\subsubsection{ASP implementation of the Step 1 of WiSCR Algorithm}
\vspace{-5pt}
In Step 1 of the WiSCR algorithm a subgraph of the graphical representation of WSC sentences is extracted such that the subgraph contains only the non-class nodes and the edges which are not labeled as \textit{instance\_of}. Following ASP rules encode the first step of the WiSCR algorithm. 
\begin{small}
\begin{lstlisting}
|\textbf{s11:}| node_G_s(X) :- has_s(X,R,Y), R!="instance_of".
|\textbf{s12:}| node_G_s(Y) :- has_s(X,R,Y), R!="instance_of".
|\textbf{s13:}| edge_G_s(X,R,Y) :- has_s(X,R,Y), R!="instance_of".
\end{lstlisting}
\end{small}

\texttt{node\_G\_s(X)} represents a node \texttt{X} in the extracted subgraph, \texttt{edge\_G\_s(X,R,Y)} represents an edge, labeled \texttt{R}, between the nodes \texttt{X} and \texttt{Y} in the extracted subgraph. 

\subsubsection{ASP implementation of the Step 2 of WiSCR Algorithm}
\vspace{-5pt}
In Step 2 of the WiSCR algorithm a subgraph of the graphical representation of a piece of knowledge is extracted such that the subgraph contains only the non-class nodes and the edges which are not labeled as \textit{instance\_of} or \textit{is\_same\_as}. Following ASP rules encode the second step of the WiSCR algorithm. 
\begin{small}
\begin{lstlisting}
s21: node_G_k(X) :- has_k(X,R,Y), R!="instance_of".
s22: node_G_k(Y) :- has_k(X,R,Y), R!="instance_of".
s23: edge_G_k(X,R,Y) :- has_k(X,R,Y), R!="instance_of", 
                        R!="is_same_as".
\end{lstlisting}
\end{small}

\texttt{node\_G\_k(X)} represents a node \texttt{X} in the extracted subgraph and \texttt{edge\_G\_k(X,R,Y)} represents an edge, labeled \texttt{R}, between the nodes \texttt{X} and \texttt{Y} in the extracted subgraph.

\subsubsection{ASP implementation of the Step 3 of WiSCR Algorithm}
Let $\mathcal{G}_\mathcal{S}'$ and $\mathcal{G}_\mathcal{K}'$ be the graphs extracted in step 1 and 2 of the WiSCR algorithm respectively. Then, in this step, all possible sets of pairs (say $\mathbb{M}_i$) of the form $(x,y)$ are extracted from $\mathcal{G}_\mathcal{S}'$ and $\mathcal{G}_\mathcal{K}'$ such that $x$ is a node in $\mathcal{G}_\mathcal{S}'$, $y$ is a node in $\mathcal{G}_\mathcal{K}'$, both $x$ and $y$ are instances of the same class and if for every $(x,y)\in\mathbb{M}_i$, $x$ is replaced by $y$ then $\mathcal{G}_\mathcal{K}'$ becomes a subgraph of the node replaced $\mathcal{G}_\mathcal{S}'$. Following ASP rules encode the third step of the WiSCR algorithm. 

\begin{small}
\begin{lstlisting}
s31: { matches(X,Y) : node_G_s(X), node_G_k(Y) }.
s32: :- matches(X,Y), matches(X1,Y), X!=X1.
s33: :- matches(X,Y), matches(X,Y1), Y!=Y1.
s34: k_node_matches(Y) :- matches(X,Y).
s35: :- not k_node_matches(Y), node_G_k(Y).
s36: :- matches(X,Y), has_s(X,"instance_of",C), 
        not has_k(Y,"instance_of",C).
s37: :- edge_G_k(X1,R,Y1), matches(X,X1), matches(Y,Y1), 
        not edge_G_s(X,R,Y).
\end{lstlisting}
\end{small}

\texttt{matches(X,Y)} represents a pair in a $\mathbb{M}_i$. The rule \texttt{s31} above generates all possible groundings of the form \texttt{matches(X,Y)} such that \texttt{X} is a node in the graph extracted in Step 1 and \texttt{Y} is a node in the graph extracted in Step 2. The rules \texttt{s32} and \texttt{s33} only keep the answer sets in which each \texttt{X} in the groundings of \texttt{matches(X,Y)} contains exactly one corresponding \texttt{Y} and vice-versa. The remaining answer sets are removed by the rules \texttt{s32} and \texttt{s33}. The rules \texttt{s34} and \texttt{s35}
removes all the answer sets in which there does not exist a grounding of \texttt{matches(X,Y)} corresponding to each node in the graph extracted in Step 2. The rule \texttt{s36} removes all the answer sets in which at least one grounding of \texttt{matched(X,Y)} exists such that both \texttt{X} and \texttt{Y} are not instances of the same node in the knowledge graph. Finally, the rule \texttt{s37} ensures that if two node \texttt{X} and \texttt{Y} in the graph extracted in the Step 2 match with two nodes \texttt{X1} and \texttt{Y1} respectively in the graph extracted in the Step 1, and \texttt{(X1,R,Y1)} is an edge in the graph from Step 2 then \textit{(X,R,Y)} is an edge in the graph from Step 1.

\subsubsection{Implementation of the Step 4 of WiSCR Algorithm}
In this step an answer to the input WSC problem is retrieved from the inputs of the WiSCR algorithm and the outputs of the steps 1 through 3. There are two parts of this the implementation in this step. The first part uses ASP rules to extract an answer from each set of pairs generated by the ASP implementation of Step 3 of the algorithm. Separate rules are used for each answer choice. Following ASP rules encode this part of Step 4 for the first answer choice. 
\begin{small}
\begin{lstlisting}
s41: invalid_1 :- matches(P,N1), matches(X,N2), ans_ch1(A),
                  pronoun(P), A!=X, N1!=N2, 
                  has_k(N1,"is_same_as",N2).
s42: invalid_2 :- matches(P,N1), matches(X,N2), ans_ch2(A),
                  pronoun(P), A!=X, N1!=N2, 
                  has_k(N1,"is_same_as",N2).
s43: ans(A) :- matches(P,N1), matches(A,N2), ans_ch1(A), 
               not invalid_1, pronoun(P), 
               has_k(N1,"is_same_as",N2).
s44: ans(A) :- matches(P,N1), matches(A,N2), ans_ch2(A),
               not invalid_2, pronoun(P), 
               has_k(N1,"is_same_as",N2).
\end{lstlisting}
\end{small}
Here, \texttt{ans(A1)} represents that \texttt{A1} is an answer of the input WSC problem given a set of \texttt{matches}. Similar rules are written for the second answer choice (assume rules \texttt{s45}, \texttt{s46}, \texttt{s47}, \texttt{s48}). Finally the following rule makes sure that there is one answer generated with respect to one set of \texttt{matches(X,Y)} facts.

\noindent
\texttt{s49: :- ans(A1), ans(A2), A1!=A2.}

The above AnsProlog program produces zero or more answer sets. Zero answer sets mean that none of the sets of \texttt{matches} were able to produce an answer. The second part assembles all the answers and produces the final answer of the input WSC problem. This part of the algorithm is implemented in python. Let us call the python procedure which implements this part as \textsc{AnswerFinder}. \textsc{AnswerFinder} takes as input the answers generated by the ASP code and outputs the final answer based on the following conditions.

\noindent
$\bullet$ if all the answers correspond to one common answer then the algorithm outputs it as final answer,\\
$\bullet$ otherwise the algorithm does not ouput anything.

The WiSCR algorithm requires graph-subgraph isomorphism detection as a sub-module. Graph-subgraph isomorphism\footnote{\url{https://en.wikipedia.org/wiki/Subgraph_isomorphism_problem}} is an NP-Complete problem. In recent times, there has been remarkable progress made in
computing answer sets efficiently. Some of the popular answer set solvers are SModels\footnote{\url{http://www.tcs.hut.fi/Software/smodels/}}, CModels\footnote{\url{http://www.cs.utexas.edu/users/tag/cmodels/}} and Clingo\footnote{\url{http://potassco.sourceforge.net/}}. In this work we used Clingo, which use techniques similar to the ones used in SAT solvers \cite{lin2004assat}. The rest of the steps in the algorithm can be performed in polynomial time.

\subsubsection{Adding New Constraints}

    Suppose we would like to add a constraint that a pair of nodes are valid in a graph-subgraph isomorphism if the two nodes in it are synonyms of each other or they are instances of the same class node. Then we can encode such constraint by replacing the rule \texttt{s36} with the following three rules.
    
    \begin{small}
    \begin{lstlisting}
    valid_pair(X,Y) :- has_s(X,"instance_of",C),
                       has_k(Y,"instance_of",C).
    valid_pair(X,Y) :- synonyms(X,Y).
    :- matches(X,Y), not valid_pair(X,Y).
    \end{lstlisting}
    \end{small}
    
    Here, \texttt{synonyms(X,Y)} represents that a node \texttt{X} in the WSC sentences' graph is synonymous to a node \texttt{Y} in the knowledge graph. We assume that a set of \texttt{synonymous(X,Y)} facts are provided as input. Let us consider the following WSC problem and knowledge as an example to understand the significance of the above rules,
    
    \noindent
    \textbf{Sentence:} The man could not lift his son because \textbf{he$_{pronoun}$} was so weak.\\\textbf{Question:} Who was weak?\textbf{Answer Choices:} a) man b) son.\\\textbf{Knowledge: }
    \textbf{IF} \textit{person1 could not lift someone because person2 was frail} \textbf{THEN} \textit{person1\_1} \textbf{is same as} \textit{person2\_7}
    
    The basic implementation of the WiSCR algorithm will not be able to utilize the above knowledge because the knowledge has the word \textit{frail} instead of \textit{weak}. However since \textit{weak} is a synonym of \textit{frail}, if we provide \texttt{synonyms(weak\_12,frail\_9)} as an input to the code which is updated by replacing the rule \texttt{s36} with the above mentioned three rules then the ASP implementation can handle the knowledge and the algorithm outputs the correct answer, i.e., \textit{man\_2}.
    
    Replacing an existing rule with only three new ones allows the algorithm to be more flexible with respect to the needed knowledge. This also shows how additional constraints and generalizations can be easily expressed as new ASP rules. Another generalization could be done by using similarity along with synonymy to add node pairs in an isomorphism. We say that if the similarity between two nodes is above a certain threshold then allow them to be added to the isomorphism set. An additional rule to encode that would be,
    
    \begin{small}
    \begin{lstlisting}
    valid_pair(X,Y) :- similar(X,Y).
    \end{lstlisting}
    \end{small}
    
    Here, \texttt{similar(X,Y)} represents that a node \texttt{X} in the WSC sentences' graph is similar to a node \texttt{Y} in the knowledge graph. We assume that a set of \texttt{similar(X,Y)} facts are provided as input.

\begin{definition}[\textbf{AnsProlog Program for WiSCR Algorithm}] \label{def:asp_prog}
	Let $\mathcal{S}$ be a sequence of sentences in a WSC problem $\mathcal{P}$, $\mathbb{T}(S)$ be the set of tokens in $\mathcal{S}$, $p\in \mathbb{T}(S)$ be the token which represents the pronoun to be resolved, $a_1,a_2\in \mathbb{T}(S)$ be two tokens which represent the two answer choices,
    $\mathcal{G_S}=(\mathbb{V}_\mathcal{S},\mathbb{E}_\mathcal{S},f_\mathcal{S})$ be a graphical representation of $\mathcal{S}$, 
    and $\mathcal{G_K}=(\mathbb{V}_\mathcal{K},\mathbb{E}_\mathcal{K},f_\mathcal{K})$ be a representation of a piece of knowledge such that $f_\mathcal{K}$ is defined using $f_\mathcal{S}$. Then, we say that the AnsProlog program $\Pi(\mathcal{G}_S,\mathcal{G}_K, p, a_1, a_2)$ is the answer set program consisting of 
	\begin{enumerate}[(i), wide, nosep, labelindent = 0pt, topsep = 0ex]
		\item 
		the facts of the form $has\_s(h_1,l_1,t_1)$ and $has\_k(h_2,l_2,t_2)$,
		\item 
		a fact of the form $pronoun(p)$,
		\item 
		two facts of the form $ans\_ch1(a_1)$ and $ans\_ch2(a_2)$,
		\item 
		the rules \texttt{s11} to \texttt{s49}
	\end{enumerate}
\end{definition}

\begin{theorem}\label{theorem:2}
	Let $\mathcal{S}$ be a sequence of sentences in a WSC problem $\mathcal{P}$, $\mathbb{T}(S)$ be the set of tokens in $\mathcal{S}$, $p\in \mathbb{T}(S)$ be the token which represents the pronoun to be resolved, $a_1,a_2\in \mathbb{T}(S)$ be two tokens which represent the two answer choices,
    $\mathcal{G_S}=(\mathbb{V}_\mathcal{S},\mathbb{E}_\mathcal{S},f_\mathcal{S})$ be a graphical representation of $\mathcal{S}$,  
    and $\mathcal{G_K}=(\mathbb{V}_\mathcal{K},\mathbb{E}_\mathcal{K},f_\mathcal{K})$ be a representation of a piece of knowledge such that $f_\mathcal{K}$ is defined using $f_\mathcal{S}$. Also, $\Pi(\mathcal{G}_S,\mathcal{G}_K, p, a_1, a_2)$ be the AnsProlog program for WiSCR algorithm and \textsc{AnswerFinder} be the python procedure defined in Section 4.2.5. Then, the WiSCR algorithm produces an answer $x$ to the input WSC problem iff 	$\Pi(\mathcal{G}_S,\mathcal{G}_K, p, a_1, a_2)$ and \textsc{AnswerFinder} together output the answer $x$.
    
\end{theorem}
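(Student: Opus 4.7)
The plan is to establish the biconditional by showing a tight correspondence between each step of the WiSCR algorithm and a group of rules in $\Pi(\mathcal{G}_S,\mathcal{G}_K, p, a_1, a_2)$, then combine the ASP output with \textsc{AnswerFinder} to recover the final aggregation step. I would proceed by induction on the four steps of the algorithm, proving for each step a lemma stating that the relevant predicates computed in the stable models of $\Pi$ correspond exactly to the structures produced by that step.

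First, I would verify the input encoding part of Definition \ref{def:asp_prog}: the facts $has\_s(h,l,t)$, $has\_k(h,l,t)$, $pronoun(p)$, $ans\_ch1(a_1)$, $ans\_ch2(a_2)$ are a faithful representation of $\mathcal{G}_\mathcal{S}$, $\mathcal{G}_\mathcal{K}$, and the distinguished tokens, so that edges and labels can be read off unambiguously. Then for Step 1, I would argue that rules \texttt{s11}--\texttt{s13} derive exactly the set of $node\_G\_s$ and $edge\_G\_s$ atoms corresponding to the vertices and edges of $\mathcal{G}_\mathcal{S}'$, because the guard $R \neq \text{``instance\_of''}$ precisely excludes the edges (and, as a consequence in this encoding, the class-node endpoints) that Step 1 removes. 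An analogous argument, where the additional guard $R \neq \text{``is\_same\_as''}$ in \texttt{s23} matches the extra exclusion of Step 2, gives the Step 2 correspondence.

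The crux of the proof is Step 3. Here I would show that the answer sets of the program restricted to the \texttt{matches/2} predicate are in bijection with the isomorphism sets $\mathbb{M}_i$ enumerated by the algorithm. The choice rule \texttt{s31} generates every subset of the cartesian product $\mathbb{V}_\mathcal{S}' \times \mathbb{V}_\mathcal{K}'$, and then each integrity constraint kills exactly those candidates violating one of the properties required in Step 3: \texttt{s32} and \texttt{s33} enforce that the mapping is a partial function in both directions (so the image on $\mathbb{V}_\mathcal{K}'$ is a well-defined injection), \texttt{s34}--\texttt{s35} enforce totality on $\mathbb{V}_\mathcal{K}'$ (so every knowledge node is matched), \texttt{s36} enforces the class-equality constraint, and \texttt{s37} enforces edge preservation (so the node-replaced $\mathcal{G}_\mathcal{S}'$ contains $\mathcal{G}_\mathcal{K}'$ as a subgraph). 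A small lemma shows both directions: given an isomorphism set produced by Step 3, the set of corresponding $matches$ atoms satisfies all constraints, and conversely any stable model satisfying them decodes to such an isomorphism. I expect this to be the main obstacle, since it requires carefully arguing that the guess-and-check pattern of \texttt{s31} plus the constraints does not over- or under-generate, and in particular that the (implicit) disjointness of $\mathbb{V}_\mathcal{S}$ and $\mathbb{V}_\mathcal{K}$ in the encoding prevents unintended matches.

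For Step 4, I would show that for a fixed answer set (equivalently, fixed $\mathbb{M}_i$), rules \texttt{s41}--\texttt{s49} derive $ans(a_j)$ exactly when the bullet rules of Step 4 declare $a_j$ to be an answer: rules \texttt{s41}--\texttt{s42} compute the ``blocker'' flags that correspond to the clause ``there does not exist an $n$ and an $x$ such that $(x,n)\in\mathbb{M}$ and \ldots $is\_same\_as$'', rules \texttt{s43}--\texttt{s48} (with their symmetric counterparts for the second choice) derive $ans$ under exactly the stated positive conditions, and \texttt{s49} rules out models producing two different answers from the same $\mathbb{M}_i$. Finally I would argue that the collection of $ans(x)$ atoms across all answer sets of $\Pi$ is exactly the multiset of per-isomorphism answers computed by Step 4, so \textsc{AnswerFinder}'s rule ``output $x$ iff all retrieved answers agree on $x$'' coincides with the algorithm's final aggregation. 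Chaining these four correspondences yields the claimed equivalence in both directions.
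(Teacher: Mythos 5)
Your proposal is correct and follows essentially the same route as the paper's own proof, which establishes the same four step-by-rule-group correspondences (\texttt{s11}--\texttt{s13} for Step 1, \texttt{s21}--\texttt{s23} for Step 2, \texttt{s31}--\texttt{s37} for Step 3, and \texttt{s41}--\texttt{s49} plus \textsc{AnswerFinder} for Step 4) and chains them to obtain the biconditional. If anything, your plan is more demanding than the paper's argument, which simply asserts the Step 3 bijection between answer sets and isomorphism sets rather than proving the no-over- or under-generation property you identify as the crux.
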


\section{Experimental Evaluation of the WiSCR Algorithm}
\vspace{-5pt}
The main goal of the evaluation process is to validate if the WiSCR algorithm is able to correctly answer the WSC problems if the problem and a relevant knowledge is provided as inputs to it in the specified formats. We evaluated a corpus\footnote{Avaiable at \url{https://tinyurl.com/y22ykz5p}} of 291 WSC problems. In this section we present the three experiments which we performed to validate the WiSCR algorithm and our findings with respect to those experiments. 

\paragraph{\textbf{Experiment 1: }}In this experiment we manually created the input graphical representations of the WSC sentences and the needed knowledge. We found that the WSC problems require different kinds of knowledge. The knowledge defined in this work (See Definition \ref{def:know}) is helpful in tackling 240 out of 291 WSC problems (82.47\%). So we wrote the representations for those 240 problems by hand. The ASP implementation answered all of those problems correctly. The reasoning algorithm defined in this work relies on the fact that the provided knowledge contains the same or similar scenarios as that of the original WSC sentences. A scenario is basically defined by the actions, properties and the type of entities present. By performing a comprehensive analysis of the WSC problems, we found that 240 out of 291 WSC problems can be answered using such knowledge. The remaining problems require two different kinds of knowledge. 26 problems require multiple pieces of knowledge. For example, \textbf{WSC Sentence: }\textit{Mary tucked her daughter Anne into bed, so that she could work.} \textbf{Question:} Who is going to work?  \textbf{Knowledge 1: }\textit{someone who is tucked into bed, may sleep} \textbf{Knowledge 2: }\textit{someone who's daughter is sleeping may be able to work}. It was observed that such knowledge has a partial overlap with the scenarios in a WSC problem. For example see the WSC sentence and knowledge 1 shown above. Due to this, such knowledge is not handled by the current algorithm. If one tries to format such knowledge according to the Definition \ref{def:know} then the reasoning algorithm will not answer anything because it will not be able to find a graph-subgraph isomorphism between the subgraphs of WSC sentences' representation and knowledge's representation. The remaining 25 problems require the knowledge that one statement is more likely to be true than the other. For example, \textbf{WSC Sentence: }\textit{Sam tried to paint a picture of shepherds with sheep, but they ended up looking more like dogs.} \textbf{Question: }\textit{What looked like dogs?} \textbf{Knowledge: } \textit{Sheep looks like a dog} \textbf{is more likely to be true than} \textit{Shepherd looks like a dog}. Such knowledge is also not handled by the current reasoning algorithm because it does not satisfy the definition (Def \ref{def:know}) of knowledge reasoned with in this work. A list of the WSC problems which are not handled by the WiSCR algorithm because of the reasons mentioned above is also present at  \url{https://tinyurl.com/y22ykz5p}.

\paragraph{\textbf{Experiment 2: }}In this experiment we considered the 240 WSC problems that are handled by the WiSCR algorithm. The needed knowledge for all the 240 problems was manually written in the \textit{`\textbf{IF} S \textbf{THEN} x \textbf{is same as} y'} format as mentioned in the Definition \ref{def:know}. Both, the WSC problems and the needed knowledge were automatically converted into graphs by using two K-Parser wrappers. The details of the K-Parser wrappers are provides in the paragraph below. 200 (82.98\%) out of 240 problems were correctly answered in this experiment by the WiSCR algorithm. The remaining 40 problems were not answered because of syntactic dependency parsing errors and part-of-speech errors while generating the representations.

Two wrappers over K-Parser were developed as part of this work. The first translates a sequence of sentences into a graphical representation that satisfies the Definition \ref{def:text}. K-Parser produces a graph for an input English sequence of sentences. The only two differences between the K-Parser output and the representation in Definition \ref{def:text} is that in K-Parser output there are two levels of class nodes instead of one and the K-Parser output contains semantic roles of entities. So, as part of this wrapper the two levels of classes was reduced to one by keeping the \textit{superclasses} of \textit{noun} and \textit{pronoun} words and by keeping the classes which represent the lemmatized form of other types of nodes. The semantic roles are not considered by the wrapper. The second wrapper is used to translate a knowledge of the form \textit{\textbf{IF} S \textbf{THEN} x \textbf{is same as} y} where \textit{S} is a sentence and \textit{x}, \textit{y} are tokens in \textit{S}. In this wrapper the same modifications to the K-Parser output of \textit{S} are made as were in the wrapper 1 along with the addition of two extra edges. An edge from the node representing \textit{x} to a node representing \textit{y} was added and labeled as \textit{is\_same\_as} and another edge from \textit{y} to \textit{x} with same label is also added.

\paragraph{\textbf{Experiment 3:} }In this experiment we used a technique to automatically extract the knowledge that is needed for the WSC problems which were correctly represented by using K-Parser. The knowledge was found and automatically extracted for 120 problems. The ASP implementation was able to correctly answer all of the 120 problems. The automated extraction of knowledge is inspired from the work done in \cite{sharma2015towards}. The idea there is to extract a set of sentences (by using a search engine) which are similar to the original WSC sentences in terms of the actions and properties in it. 
Such sentences are then parsed with the help of K-Parser to extract the knowledge. For example, a sentence extracted for the Winograd sentence shown in Figure \ref{fig:sent1} is \textit{``She could not lift him because she is weak.''}. And the knowledge extracted from the above sentence is \textit{``IF person1 could not lift someone because person2 is weak THEN person1\_1 is same as person2\_7''}. Because of the limited availability of search engine access, the sentences similar to only 120 WSC sentences could be extracted. Those sentences are then passed to a rule based knowledge extraction module. The module uses the K-Parser outputs to find the patterns which satisfy the kind of knowledge handled by our reasoning algorithm.

\section{Related Work}
Over the years various approaches have been proposed to solve the Winograd Schema Challenge by using additional knowledge. Such works include the ones which focus on defining the reasoning theories \cite{bailey2015winograd,schuller2014tackling,richard2018role,wolff2018interpreting}. These approaches mention the need of additional knowledge and reasoning, but they suffer from the issue of low coverage on the WSC corpus.

Another set of approaches address the knowledge extraction and reasoning with it in a joint method. Such approaches include the ones which use on the fly knowledge extraction \cite{sharma2015towards,emami2018knowledge}, and the ones which perform knowledge extraction with respect to a pre-populated knowledge base \cite{isaak2016tackling}. These approaches rely on the heuristic procedures. More recently, composition embedding \cite{liu2017cause} and statistical language modelling \cite{radford2019language} based approaches have been used to address the challenge. The later recently reported the state of the art accuracy (70.70\%) on the overall corpus. Such approaches try to capture the knowledge in the form of word and sentences embedding and later use it to infer which phrase is more probable. This helps in the cases where the needed knowledge is based on the possible correlation between two terms for example \textit{``a ball is kicked''} where there is a correlation between \textit{kicked} and \textit{ball}. But it is not be able to infer that \textit{``worm is tasty''} for the Winograd Schema Challenge problem \textit{``Fish ate the worm. It was tasty.''}. On the other hand it is more possible that it finds \textit{``fish is tasty''} more probable because \textit{``fish''} and \textit{``tasty''} has higher chances of occurring in the same context in text corpora.

\section{Conclusion}
In this work, we attempted to solve the Winograd Schema Challenge by reasoning with additional knowledge. To that end we defined a graphical representation of the English sentences in the input problems and a graphical representation of the relevant knowledge. We also defined a commonsense reasoning algorithm for WSC (WiSCR algorithm). We showed how an approach built on top of graph-subgraph isomorphism encoded in ASP is able to tackle 240 out of 291 WSC problems. We presented how the ASP implementation of the algorithm allows us to add new constraints easily. It also makes the current implementation to easily generalize by adding new rules. 

\bibliographystyle{acmtrans}
\bibliography{refs}

\newpage
\section{APPENDIX}

\subsection{Proof of Theorem \ref{theorem:1}}
The proof of Theorem \ref{theorem:1} is done using a set of lemmas. In this sections we present those lemmas and then use them to prove Theorem \ref{theorem:1}.

\begin{lemma}\label{lemma:1} Let $\mathcal{G_S}=(\mathbb{V}_\mathcal{S},\mathbb{E}_\mathcal{S},f_\mathcal{S})$ be a graphical representation of the sequence of sentences in a WSC problem. Then, Step 1 of the WiSCR Algorithm extracts a subgraph $\mathcal{G_\mathcal{S}'}$ of $\mathcal{G_S}$ such that $\mathcal{G_\mathcal{S}'} = (\mathbb{V}_\mathcal{S}',\mathbb{E}_\mathcal{S}',f_\mathcal{S}')$ where $\mathbb{V}_\mathcal{S}' = \mathbb{V}_\mathcal{S} - \mathbb{V}_\mathcal{S}^c$, $\mathbb{V}_\mathcal{S}^c$ is a set of all the class nodes in $\mathcal{G_S}$, $f_\mathcal{S}' = f_\mathcal{S}$, $\mathbb{E}_\mathcal{S}'=\mathbb{E}_\mathcal{S}-\mathbb{E}_\mathcal{S}^c$, and $e\in \mathbb{E}_\mathcal{S}^c$ if $f(e) = instance\_of$.
\end{lemma}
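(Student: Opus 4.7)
The plan is to unfold the description of Step 1 of the WiSCR algorithm and verify, component by component, that the extracted object matches the specification of $\mathcal{G_S}'$ given in the lemma. Since Step 1 is described constructively (keep the non-class nodes, and keep all edges among such nodes), and since Definition \ref{def:text} gives a precise structural description of $\mathcal{G_S}$, this reduces to a careful bookkeeping argument rather than any deep combinatorial reasoning.

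First I would handle the vertex set. By Definition \ref{def:text}, $\mathbb{V}_\mathcal{S}=\mathbb{V}_1\cup\mathbb{V}_2$ with $\mathbb{V}_1$ the instance nodes and $\mathbb{V}_2$ the class nodes, and these two sets are disjoint. Step 1 explicitly retains ``all the nodes which are not class nodes'', so the extracted vertex set is $\mathbb{V}_\mathcal{S}\setminus\mathbb{V}_2=\mathbb{V}_\mathcal{S}-\mathbb{V}_\mathcal{S}^c$, matching the lemma's $\mathbb{V}_\mathcal{S}'$.

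Next I would handle the edge set. By Definition \ref{def:text}, $\mathbb{E}_\mathcal{S}=\mathbb{E}_1\cup\mathbb{E}_2$, where edges in $\mathbb{E}_2$ are exactly those going from an instance node to a class node, and by the edge-labelling function $f$, these are precisely the edges labelled $instance\_of$. Hence the set $\mathbb{E}_\mathcal{S}^c=\{e\in\mathbb{E}_\mathcal{S}\mid f(e)=instance\_of\}$ coincides with $\mathbb{E}_2$. The edges retained by Step 1 are those whose endpoints both lie in $\mathbb{V}_\mathcal{S}'$; since every edge in $\mathbb{E}_2$ has its tail in $\mathbb{V}_2$ and every edge in $\mathbb{E}_1$ has both endpoints in $\mathbb{V}_1\subseteq\mathbb{V}_\mathcal{S}'$, what Step 1 retains is exactly $\mathbb{E}_1=\mathbb{E}_\mathcal{S}-\mathbb{E}_\mathcal{S}^c$, matching $\mathbb{E}_\mathcal{S}'$. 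For the labelling, Step 1 does not rename any edge, so $f_\mathcal{S}'$ agrees with $f_\mathcal{S}$ on the retained edges, giving $f_\mathcal{S}'=f_\mathcal{S}\!\upharpoonright_{\mathbb{E}_\mathcal{S}'}$, which is what the lemma asserts.

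The only mild subtlety (and the step that deserves the most care when written out) is justifying the identification $\mathbb{E}_\mathcal{S}^c=\mathbb{E}_2$: one must invoke both directions of the definition of $f$ in Definition \ref{def:text}, namely that $f(e)=instance\_of$ iff $e\in\mathbb{E}_2$. Once that equivalence is established, the remainder is immediate from the set-theoretic description of Step 1, and the lemma follows. I do not expect any genuine difficulty; the main obstacle is simply to align the informal prose of Step 1 with the formal set-builder specification in the lemma statement without skipping the label/edge-class correspondence.
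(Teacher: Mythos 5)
Your proposal is correct and follows essentially the same route as the paper's own proof: both simply unfold the constructive description of Step 1 and match each component (vertices, edges, labelling) against the set-builder specification in the lemma via Definition \ref{def:text}. If anything, your version is slightly more careful than the paper's, since you explicitly justify the identification $\mathbb{E}_\mathcal{S}^c=\mathbb{E}_2$ through the edge-labelling function, a step the paper compresses into ``So, by Definition \ref{def:text}.''
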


\begin{proof}
According to the Step 1 of the WiSCR algorithm, given a graph $\mathcal{G_S}=(\mathbb{V}_\mathcal{S},\mathbb{E}_\mathcal{S},f_\mathcal{S})$, a subgraph of it is extracted. Let $\mathcal{G_S'} = (\mathbb{V}_\mathcal{S}',\mathbb{E}_\mathcal{S}',f_\mathcal{S}')$ be the extracted subgraph. $\mathbb{V}_\mathcal{S}'$ contains all the nodes from $\mathcal{G_S}$ which are not class nodes, i.e., $\mathbb{V}_\mathcal{S}' = \mathbb{V}_\mathcal{S} - \mathbb{V}_\mathcal{S}^c$, $\mathbb{V}_\mathcal{S}^c$ is a set of all the class nodes in $\mathcal{G_S}$. Also, $\mathbb{E}_\mathcal{S}'$ contains all the edges between the nodes in $\mathbb{V}_\mathcal{S}'$. So, by Definition \ref{def:text} $\mathbb{E}_\mathcal{S}'=\mathbb{E}_\mathcal{S}-\mathbb{E}_\mathcal{S}^c$ where $e\in \mathbb{E}_\mathcal{S}^c$ if $f(e) = instance\_of$. Furthermore, no new edges or nodes are added to $\mathcal{G_S'}$ so $f_\mathcal{S}'$ = $f_\mathcal{S}$.

Hence, the step 1 of the WiSCR Algorithm extract a subgraph $\mathcal{G_S'}$ from $\mathcal{G_S}$ such that if $\mathcal{G_S} = (\mathbb{V}_\mathcal{S},\mathbb{E}_\mathcal{S},f_\mathcal{S})$ then $\mathcal{G_S'} = (\mathbb{V}_\mathcal{S}',\mathbb{E}_\mathcal{S}',f_\mathcal{S}')$ where $\mathbb{V}_\mathcal{S}' = \mathbb{V}_\mathcal{S} - \mathbb{V}_\mathcal{S}^c$, $\mathbb{V}_\mathcal{S}^c$ is a set of all the class nodes in $\mathcal{G_S}$, $f_\mathcal{S}' = f_\mathcal{S}$, $\mathbb{E}_\mathcal{S}'=\mathbb{E}_\mathcal{S}-\mathbb{E}_\mathcal{S}^c$, and $e\in \mathbb{E}_\mathcal{S}^c$ iff $f(e) = instance\_of$.
\end{proof}

\begin{lemma}\label{lemma:2} Let $\mathcal{G_K}=(\mathbb{V}_\mathcal{K},\mathbb{E}_\mathcal{K},f_\mathcal{K})$ be a graphical representation of a knowledge (By Definition \ref{def:know_rep}). Then, Step 2 of the WiSCR Algorithm extracts a subgraph $\mathcal{G_K'}$ from $\mathcal{G_K}$ such that $\mathcal{G_K'} = (\mathbb{V}_\mathcal{K}',\mathbb{E}_\mathcal{K}',f_\mathcal{K}')$ where $\mathbb{V}_\mathcal{K}' = \mathbb{V}_\mathcal{K} - \mathbb{V}_\mathcal{K}^c$, $\mathbb{V}_\mathcal{K}^c$ is a set of all the class nodes in $\mathcal{G_K}$, $f_\mathcal{K}' = f_\mathcal{K}$, $\mathbb{E}_\mathcal{K}'=\mathbb{E}_\mathcal{K}-\mathbb{E}_\mathcal{K}^c$, and $e\in \mathbb{E}_\mathcal{K}^c$ if $f(e) \in \{instance\_of,is\_same\_as\}$.
\end{lemma}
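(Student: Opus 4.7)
The plan is to mirror the argument used for Lemma \ref{lemma:1}, simply unpacking the textual description of Step 2 of the WiSCR algorithm and comparing it against the claimed characterization of $\mathcal{G_K'}$. The only substantive change from Lemma \ref{lemma:1} is that we must additionally account for the removal of \emph{is\_same\_as} edges, on top of the removal of class nodes and their incident \emph{instance\_of} edges, so the proof splits naturally into a vertex part and an edge part.

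First I would invoke the description of Step 2 to extract the vertex set: the step keeps exactly those nodes of $\mathcal{G_K}$ that are not class nodes, so directly $\mathbb{V}_\mathcal{K}' = \mathbb{V}_\mathcal{K} - \mathbb{V}_\mathcal{K}^c$, where $\mathbb{V}_\mathcal{K}^c$ denotes the class nodes. Next, for the edge set, I would argue in two sub-steps. By Definition \ref{def:know_rep}, an \emph{instance\_of} edge must have its target in $\mathbb{V}_\mathcal{K}^c$, so every such edge is automatically excluded once class nodes are removed; this accounts for the edges with $f_\mathcal{K}(e) = \textit{instance\_of}$. Additionally, Step 2 explicitly drops the \emph{is\_same\_as} edges, which by Definition \ref{def:know_rep} connect two nodes in $\mathbb{V}_\mathcal{K}'$. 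Combining the two exclusions gives $\mathbb{E}_\mathcal{K}' = \mathbb{E}_\mathcal{K} - \mathbb{E}_\mathcal{K}^c$ with $e \in \mathbb{E}_\mathcal{K}^c$ iff $f_\mathcal{K}(e) \in \{\textit{instance\_of}, \textit{is\_same\_as}\}$. Finally, since Step 2 only deletes nodes and edges and does not relabel anything, the labelling function is unchanged on the surviving edges, i.e.\ $f_\mathcal{K}' = f_\mathcal{K}$.

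The proof is essentially a definition-chase, and the main (and only minor) obstacle is being careful that the two sources of edge removal \textemdash{} deletion of class nodes (which forces \emph{instance\_of} edges out) versus explicit deletion of \emph{is\_same\_as} edges \textemdash{} together yield exactly the set $\mathbb{E}_\mathcal{K}^c$ claimed, with no unintended extra deletions. This follows because, by Definition \ref{def:know_rep}, every edge of $\mathcal{G_K}$ inherited from $\mathcal{G_S}$ is either between two instance nodes (hence between two nodes of $\mathbb{V}_\mathcal{K}'$) or has label \emph{instance\_of}, and the only other edges are the two \emph{is\_same\_as} edges added by Definition \ref{def:know_rep}. Assembling these observations yields the desired characterization of $\mathcal{G_K'}$ and completes the proof.
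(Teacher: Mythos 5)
Your proposal is correct and follows essentially the same route as the paper's own proof: a direct definition-chase unpacking the textual description of Step 2 into the vertex set, the edge set, and the unchanged labelling function. If anything, your explicit check that the two sources of edge removal (class-node deletion forcing out \emph{instance\_of} edges, and the explicit deletion of \emph{is\_same\_as} edges) account for exactly $\mathbb{E}_\mathcal{K}^c$ with no unintended deletions is slightly more careful than the paper's one-line appeal to Definition \ref{def:know_rep}.
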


\begin{proof}
According to the Step 2 of the WiSCR algorithm, given a graphical representation of a knowledge $\mathcal{G_K}=(\mathbb{V}_\mathcal{K},\mathbb{E}_\mathcal{K},f_\mathcal{K})$, a subgraph of it is extracted. Let $\mathcal{G_K'} = (\mathbb{V}_\mathcal{K}',\mathbb{E}_\mathcal{K}',f_\mathcal{K}')$ be the extracted subgraph. 
$\mathbb{V}_\mathcal{K}'$ contains all the nodes from $\mathcal{G_K}$ which are not class nodes, i.e., $\mathbb{V}_\mathcal{K}' = \mathbb{V}_\mathcal{K} - \mathbb{V}_\mathcal{K}^c$, $\mathbb{V}_\mathcal{K}^c$ is a set of all the class nodes in $\mathcal{G_K}$. Also, $\mathbb{E}_\mathcal{K}'$ contains all the edges between the nodes in $\mathbb{V}_\mathcal{K}'$ except the ones labeled as \textit{`is\_same\_as'}. So, by Definition \ref{def:know_rep} $\mathbb{E}_\mathcal{K}'=\mathbb{E}_\mathcal{K}-\mathbb{E}_\mathcal{K}^c$, and $e\in \mathbb{E}_\mathcal{K}^c$ if $f(e) \in \{instance\_of,is\_same\_as\}$. Furthermore, no new edges or nodes are added to $\mathcal{G_K'}$ so $f_\mathcal{K}'$ = $f_\mathcal{K}$.

Hence, the step 2 of the WiSCR Algorithm extract a subgraph $\mathcal{G_K'}$ from $\mathcal{G_K}$ such that if $\mathcal{G_K} = (\mathbb{V}_\mathcal{K},\mathbb{E}_\mathcal{K},f_\mathcal{K})$ then $\mathcal{G_K'} = (\mathbb{V}_\mathcal{K}',\mathbb{E}_\mathcal{K}',f_\mathcal{K}')$ where $\mathbb{V}_\mathcal{K}' = \mathbb{V}_\mathcal{K} - \mathbb{V}_\mathcal{K}^c$, $\mathbb{V}_\mathcal{K}^c$ is a set of all the class nodes in $\mathcal{G_K}$, $f_\mathcal{K}' = f_\mathcal{K}$, $\mathbb{E}_\mathcal{K}'=\mathbb{E}_\mathcal{K}-\mathbb{E}_\mathcal{K}^c$, and $e\in \mathbb{E}_\mathcal{K}^c$ if $f(e) \in \{instance\_of,is\_same\_as\}$.
\end{proof}

\begin{lemma}\label{lemma:isomorphism}
Let $\mathcal{G_S}$ = ($\mathbb{V}_\mathcal{S}$,$\mathbb{E}_\mathcal{S}$,$f_\mathcal{S}$) be a graphical representation of a sequence of sentences in a WSC problem, $\mathcal{G_S'}$ = ($\mathbb{V}_\mathcal{S}'$,$\mathbb{E}_\mathcal{S}'$,$f_\mathcal{S}'$) be a subgraph of $\mathcal{G_S}$ such that $\mathbb{V}_\mathcal{S}'=\mathbb{V}_\mathcal{S}-\mathbb{V}_\mathcal{S}^c$ where $\mathbb{V}_\mathcal{S}^c$ is the set of all the class nodes in $\mathcal{G_S}$, $f_\mathcal{S}'=f_\mathcal{S}$ and $\mathbb{E}_\mathcal{S}'=\mathbb{E}_\mathcal{S}-\mathbb{E}_\mathcal{S}^c$ where $e\in \mathbb{E}_\mathcal{S}^c$ iff $f_\mathcal{S}(e)=``instance\_of"$. Let $\mathcal{G_K}$ = ($\mathbb{V}_\mathcal{K}$,$\mathbb{E}_\mathcal{K}$,$f_\mathcal{K}$) be a graphical representation of a knowledge where $f_\mathcal{K}$ is defined using $f_\mathcal{S}$, $\mathcal{G_K'}$ = ($\mathbb{V}_\mathcal{K}'$,$\mathbb{E}_\mathcal{K}'$,$f_\mathcal{K}'$) be a subgraph of $\mathcal{G_K}$ such that $\mathbb{V}_\mathcal{K}'=\mathbb{V}_\mathcal{K}-\mathbb{V}_\mathcal{K}^c$ where $\mathbb{V}_\mathcal{K}^c$ is the set of all the class nodes in $\mathcal{G_K}$, $f_\mathcal{K}'=f_\mathcal{K}$ and $\mathbb{E}_\mathcal{K}' = \mathbb{E}_\mathcal{K}-\mathbb{E}_\mathcal{K}^c$ where $e\in \mathbb{E}_\mathcal{K}^c$ iff $f_\mathcal{K}(e) \in \{is\_same\_as,instance\_of\}$. Then, Step 3 of the WiSCR algorithm extracts the all possible sets of node pairs of the form ($a$,$b$) such that either there does not exist such a non-empty set or if $\mathbb{M}_i$ is one such non-empty set then,
\begin{itemize}[wide, nosep, labelindent = 15pt, topsep = 2ex]
    \item for each $(a,b)\in \mathbb{M}_i$, $a\in \mathbb{V}_\mathcal{S}'$ and $b\in \mathbb{V}_\mathcal{K}'$, 
    
    \item for each $(a,b)\in \mathbb{M}_i$, $a$ and $b$ are instances of same class, i.e., $(a,i)\in \mathbb{E}_\mathcal{S}$, $(b,i)\in \mathbb{E}_\mathcal{K}$, $f_\mathcal{S}((a,i))=instance\_of$ and $f_\mathcal{K}((b,i))=instance\_of$
    
    
    \item if for every pair ($a$,$b$)$\in$ $\mathbb{M}_i$, $a$ is replaced by $b$ in $\mathbb{V}_\mathcal{S}'$ then $\mathcal{G_K'}$ becomes a 
    subgraph of the node-replaced $\mathcal{G_S'}$
\end{itemize}
\end{lemma}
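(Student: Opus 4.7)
My plan is to unfold the specification of Step 3 and check each of the three bullet conditions in the lemma directly against what the step produces, using Lemmas \ref{lemma:1} and \ref{lemma:2} to pin down the inputs. First I would invoke Lemma \ref{lemma:1} to assert that the $\mathcal{G}_\mathcal{S}'$ fed into Step 3 is exactly the subgraph described in the hypotheses of Lemma \ref{lemma:isomorphism} (no class nodes, no $instance\_of$ edges), and similarly Lemma \ref{lemma:2} to pin down $\mathcal{G}_\mathcal{K}'$. With the inputs fixed, Step 3 can then be viewed as a standard graph-subgraph isomorphism enumeration over these two graphs, augmented with a side constraint requiring class agreement.

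The verification then proceeds one bullet at a time for an arbitrary non-empty set $\mathbb{M}_i$ produced by Step 3. For the first bullet, I would simply cite the wording of Step 3: every pair it produces has its first coordinate in $\mathcal{G}_\mathcal{S}'$ and its second in $\mathcal{G}_\mathcal{K}'$, so $a \in \mathbb{V}_\mathcal{S}'$ and $b \in \mathbb{V}_\mathcal{K}'$ automatically. For the second bullet, I would quote the explicit class constraint of Step 3 (``both $x$ and $y$ must be instances of same class'') and translate it into the formal statement given in the lemma, namely existence of a common $i$ with $(a,i) \in \mathbb{E}_\mathcal{S}$ and $(b,i) \in \mathbb{E}_\mathcal{K}$ both labelled $instance\_of$. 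For the third bullet, I would directly invoke the defining property of a graph-subgraph isomorphism restated in Step 3: after performing the node substitution $a \mapsto b$ for every pair, $\mathcal{G}_\mathcal{K}'$ sits as a subgraph of the modified $\mathcal{G}_\mathcal{S}'$.

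Completeness of the enumeration (``all possible sets'') follows from the specification of the subgraph-isomorphism subroutine \cite{cordella2004sub}, which by assumption returns every valid mapping; the empty-set case corresponds precisely to the branch in Step 3 where no mapping satisfying the constraints exists. The main obstacle I expect is bookkeeping rather than combinatorics: because Lemmas \ref{lemma:1} and \ref{lemma:2} strip the $instance\_of$ edges from the working subgraphs, the class-agreement check in Step 3 must be interpreted against the original graphs $\mathcal{G}_\mathcal{S}$ and $\mathcal{G}_\mathcal{K}$, and I will need to be explicit that Step 3 retains access to this information. Once that correspondence is made careful, the proof reduces to a direct restatement of Step 3's specification in the notation of the lemma, and can be closed with a line-by-line matching of the two formulations.
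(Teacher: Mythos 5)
Your proposal matches the paper's own proof in both structure and substance: the paper likewise invokes Lemma \ref{lemma:1} and Lemma \ref{lemma:2} to fix the subgraphs $\mathcal{G}_\mathcal{S}'$ and $\mathcal{G}_\mathcal{K}'$ handed to Step 3, and then simply restates Step 3's specification (isomorphism pairs, class agreement, empty-set branch) in the lemma's notation. Your extra remark that the class-agreement check must be read against the original graphs $\mathcal{G}_\mathcal{S}$ and $\mathcal{G}_\mathcal{K}$ (since the $instance\_of$ edges are stripped from the working subgraphs) is a point the paper glosses over, but it does not change the argument.
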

\begin{proof}
(i) Given a graphical representation of the sentences in a WSC problem (say $\mathcal{G_S}$ = ($\mathbb{V}_\mathcal{S}$,$\mathbb{E}_\mathcal{S}$,$f_\mathcal{S}$)) and Lemma \ref{lemma:1}, the Step 1 of the WiSCR algorithm produces a subgraph of $\mathcal{G_S}$ (say $\mathcal{G_S'}$ = ($\mathbb{V}_\mathcal{S}'$,$\mathbb{E}_\mathcal{S}'$,$f_\mathcal{S}'$)) such that $\mathbb{V}_\mathcal{S}'=\mathbb{V}_\mathcal{S}-\mathbb{V}_\mathcal{S}^c$ where $\mathbb{V}_\mathcal{S}^c$ is a set of all the class nodes in $\mathcal{G_S}$, $f_\mathcal{S}'=f_\mathcal{S}$ and $\mathbb{E}_\mathcal{S}'=\mathbb{E}_\mathcal{S}-\mathbb{E}_\mathcal{S}^c$ where $e\in \mathbb{E}_\mathcal{S}^c$ if $f_\mathcal{S}(e)=instance\_of$.

\noindent
\newline
(ii) Given a graphical representation of a knowledge (say $\mathcal{G_K}$ = ($\mathbb{V}_\mathcal{K}$,$\mathbb{E}_\mathcal{K}$,$f_\mathcal{K}$)) and Lemma \ref{lemma:2}, the step 2 of the WiSCR algorithm produces a subgraph of $\mathcal{G_K}$ (say $\mathcal{G_K'}$ = ($\mathbb{V}_\mathcal{K}'$,$\mathbb{E}_\mathcal{K}'$,$f_\mathcal{K}'$)) such that $\mathbb{V}_\mathcal{K}'=\mathbb{V}_\mathcal{K}-\mathbb{V}_\mathcal{K}^c$ where $\mathbb{V}_\mathcal{K}^c$ is a set of all the class nodes in $\mathcal{G_K}$, $f_\mathcal{K}'=f_\mathcal{K}$ and $\mathbb{E}_\mathcal{K}'=\mathbb{E}_\mathcal{K}-\mathbb{E}_\mathcal{K}^c$ where $e\in \mathbb{E}_\mathcal{K}^c$ if $f_\mathcal{K}(e)\in \{instance\_of,is\_same\_as\}$.

\noindent
\newline
(iii) Given $\mathcal{G_S'}$ and $\mathcal{G_K'}$ are the graphs generated by the steps 1 and 2 of the WiSCR algorithm 
respectively, then according to the Step 3 of the WiSCR algorithm, it extracts all possible graph-subgraph isomorphisms between $\mathcal{G_S'}$ and $\mathcal{G_K'}$. In other words, it extracts all possible sets of pairs of the form  $(a,b)$ such that either there does not exist such a non-empty set or if $\mathbb{M}_i$ is one such non-empty set then,
\begin{itemize}[wide, nosep, labelindent = 15pt, topsep = 2ex]
    \item for each $(a,b)\in \mathbb{M}_i$, $a\in\mathbb{V}_\mathcal{S}'$ and $b\in\mathbb{V}_\mathcal{K}'$, 
    
    \item for each $(a,b)\in \mathbb{M}_i$, $a$ and $b$ are instances of same class, i.e., $(a,i)\in\mathbb{E}_\mathcal{S}$, $(b,i)\in\mathbb{E}_\mathcal{K}$, $f_\mathcal{S}((a,i))$ = $instance\_of$ and $f_\mathcal{K}((a,i)) =instance\_of$, and 
    
    \item if for every pair $(a,b)\in\mathbb{M}_i$, $a$ is replaced by $b$ in $\mathbb{V}_\mathcal{S}'$ then $\mathcal{G_K'}$ becomes a subgraph of the node-replaced $\mathcal{G_S'}$
\end{itemize}

\end{proof}

\begin{customthm}{1}
Let $\mathcal{S}$ be a sequence of sentences in a WSC problem $\mathcal{P}$, $\mathcal{G_S}=(\mathbb{V}_\mathcal{S},\mathbb{E}_\mathcal{S},f_\mathcal{S})$ be a graphical representation of $\mathcal{S}$, $p$ be a node in $\mathcal{G_S}$ such that it represents the pronoun to be resolved in $\mathcal{P}$, $a_1$ and $a_2$ be two nodes in $\mathcal{G_S}$ such that they represent the two answer choices for $\mathcal{P}$, and $\mathcal{G_K}=(\mathbb{V}_\mathcal{K},\mathbb{E}_\mathcal{K},f_\mathcal{K})$ be a graphical representation of a knowledge such that $f_\mathcal{K}$ is defined using $f_\mathcal{S}$. Then, the Winograd Schema Challenge Reasoning (WiSCR) algorithm outputs,

\begin{itemize}[wide, nosep, labelindent = 15pt, topsep = 2ex]
    \item $a_1$ as the answer of $\mathcal{P}$, if only $a_1$ provides the 'most natural resolution' (By Definition \ref{def:nat_res}) for $p$ in $\mathcal{G_S}$,
    
    \item $a_2$ as the answer of $\mathcal{P}$, if only $a_2$ provides the `most natural resolution' for $p$ in $\mathcal{G_S}$,
    
    \item No answer otherwise
\end{itemize}
\end{customthm}

\begin{proof}
If $\mathcal{G_S}=(\mathbb{V}_\mathcal{S},\mathbb{E}_\mathcal{S},f_\mathcal{S})$ is a graphical representation of the sequence of sentences in a WSC problem then by Lemma \ref{lemma:1}, we have that\\
Step 1 of the WiSCR Algorithm extract a subgraph $\mathcal{G_\mathcal{S}'}$ from $\mathcal{G_S}$ such that $\mathcal{G_\mathcal{S}'} = (\mathbb{V}_\mathcal{S}',\mathbb{E}_\mathcal{S}',f_\mathcal{S}')$ where $\mathbb{V}_\mathcal{S}' = \mathbb{V}_\mathcal{S} - \mathbb{V}_\mathcal{S}^c$, $\mathbb{V}_\mathcal{S}^c$ is a set of all the class nodes in $\mathcal{G_S}$, $f_\mathcal{S}' = f_\mathcal{S}$, $\mathbb{E}_\mathcal{S}'=\mathbb{E}_\mathcal{S}-\mathbb{E}_\mathcal{S}^c$, and $e\in \mathbb{E}_\mathcal{S}^c$ if $f(e) = instance\_of$.\\

\noindent
If $\mathcal{G_K}=(\mathbb{V}_\mathcal{K},\mathbb{E}_\mathcal{K},f_\mathcal{K})$ is a graphical representation of a knowledge then by Lemma \ref{lemma:2}, we have that\\
Step 2 of the WiSCR Algorithm extracts a subgraph $\mathcal{G_K'}$ from $\mathcal{G_K}$ such that $\mathcal{G_K'} = (\mathbb{V}_\mathcal{K}',\mathbb{E}_\mathcal{K}',f_\mathcal{K}')$ where $\mathbb{V}_\mathcal{K}' = \mathbb{V}_\mathcal{K} - \mathbb{V}_\mathcal{K}^c$, $\mathbb{V}_\mathcal{K}^c$ is a set of all the class nodes in $\mathcal{G_K}$, $f_\mathcal{K}' = f_\mathcal{K}$, $\mathbb{E}_\mathcal{K}'=\mathbb{E}_\mathcal{K}-\mathbb{E}_\mathcal{K}^c$, and $e\in \mathbb{E}_\mathcal{K}^c$ if $f(e) \in \{instance\_of,is\_same\_as\}$.\\

\noindent
If $\mathcal{G_S}$, $\mathcal{G_S}'$, $\mathcal{G_K}$ and $\mathcal{G_K}'$ are inputs to the Step 3 of the WiSCR algorithm then by Lemma \ref{lemma:isomorphism}, we have that\\
Step 3 of the WiSCR algorithm produces all the possible sets of node pairs of the form $(a,b)$ such that either there does not exist such a non-empty set or if $\mathbb{M}_i$ is one such non-empty set then,
\begin{itemize}[wide, nosep, labelindent = 15pt, topsep = 2ex]
    \item for each $(a,b)\in \mathbb{M}_i$, $a\in \mathbb{V}_\mathcal{S}'$ and $b\in \mathbb{V}_\mathcal{K}'$, and
    
    \item for each $(a,b)\in \mathbb{M}_i$, $a$ and $b$ are instances of same class, i.e., $(a,i)\in\mathbb{E}_\mathcal{S}$, $(b,i)\in\mathbb{E}_\mathcal{K}$, $f_\mathcal{S}((a,i))=instance\_of$ and $f_\mathcal{K}((a,i))=instance\_of$
    
    \item if for every pair ($a$,$b$)$\in$ $\mathbb{M}_i$, $a$ is replaced by $b$ in $\mathbb{V}_\mathcal{S}'$ then $\mathcal{G_K'}$ becomes an induced subgraph of $\mathcal{G_S'}$
\end{itemize}



\noindent
If $p\in\mathbb{V}_\mathcal{S}$ represents the pronoun to be resolved, $a_1,a_2\in\mathbb{V}_\mathcal{S}$ represent the two answer choices. Then by Step 4 of the WiSCR algorithm and for each possible non-empty set of pairs (say $\mathbb{M}_i$) produced by Step 3, we have that
\begin{enumerate}[wide, nosep, labelindent = 5pt, topsep = 1ex]
    \item $a_1$ as an answer if,
    \begin{itemize}[wide, nosep, labelindent = 15pt, topsep = 1ex]
        \item $(p,n_1)\in\mathbb{M}_i$, 
        \item $(a_1,n_2)\in\mathbb{M}_i$, 
        \item $(n_1,n_2)\in\mathbb{E}_\mathcal{K}$ and $f_\mathcal{K}((n_1,n_2))=is\_same\_as$, or $(n_2,n_1)\in\mathbb{E}_\mathcal{K}$ and $f_\mathcal{K}((n_2,n_1))=is\_same\_as$, and
        \item there does not exist an $n$ and an $x$ ($x\neq a_1$) such that $(x,n)\in\mathbb{M}_i$ and either $f_\mathcal{K}((n,n_1))=is\_same\_as$ or $f_\mathcal{K}((n_1,n))=is\_same\_as$.
    \end{itemize}
    
    \item $a_2$ as an answer if,
    \begin{itemize}[wide, nosep, labelindent = 15pt, topsep = 2ex]
        \item $(p,n_1)\in\mathbb{M}_i$, 
        \item $(a_2,n_2)\in\mathbb{M}_i$, 
        \item $(n_1,n_2)\in\mathbb{E}_\mathcal{K}$ where $f_\mathcal{K}((n_1,n_2))=is\_same\_as$, or $(n_2,n_1)\in\mathbb{E}_\mathcal{K}$ where $f_\mathcal{K}((n_2,n_1))=is\_same\_as$, and
        \item there does not exist an $n$ and an $x$ ($x\neq a_2$) such that $(x,n)\in\mathbb{M}_i$ and either $f_\mathcal{K}((n,n_1))=is\_same\_as$ or $f_\mathcal{K}((n_1,n))=is\_same\_as$.
    \end{itemize}
    
    \item not answer is produced if neither $a_1$ nor $a_2$ are found as an answer
\end{enumerate}

Then, the Step 4 of the WiSCR algorithm outputs $a_1$ as the final answer if only $a_1$ is found as an answer with respect to the possible node pairs extracted in the Step 3. The Step 4 of the WiSCR algorithm outputs $a_2$ as the final answer if only $a_2$ is found as an answer with respect to the possible node pairs extracted in the Step 3. The Step 4 of the algorithm does not answer anything otherwise. 

\noindent
By definition of `most natural resolution' and above details of the Step 4 of the WiSCR algorithm, we have that
\begin{itemize}[wide, nosep, labelindent = 15pt, topsep = 2ex]
    \item $a_1$ is the answer of $\mathcal{P}$, if only $a_1$ provides the 'most natural resolution' (By Definition \ref{def:nat_res}) for $p$ in $\mathcal{G_S}$,
    
    \item $a_2$ is the answer of $\mathcal{P}$, if only $a_2$ provides the `most natural resolution' for $p$ in $\mathcal{G_S}$,
    
    \item No answer otherwise
\end{itemize}



\noindent The theorem is proved.
\end{proof}

\subsection{Proof of Theorem \ref{theorem:2}}
\begin{customthm}{2}
	
	Let $\mathcal{S}$ be a sequence of sentences in a WSC problem $\mathcal{P}$, $\mathbb{T}(S)$ be the set of tokens in $\mathcal{S}$, $p\in \mathbb{T}(S)$ be the token which represents the pronoun to be resolved, $a_1,a_2\in \mathbb{T}(S)$ be two tokens which represent the two answer choices,
    $\mathcal{G_S}=(\mathbb{V}_\mathcal{S},\mathbb{E}_\mathcal{S},f_\mathcal{S})$ be a graphical representation of $\mathcal{S}$, 
    and $\mathcal{G_K}=(\mathbb{V}_\mathcal{K},\mathbb{E}_\mathcal{K},f_\mathcal{K})$ be a representation of a knowledge such that $f_\mathcal{K}$ is defined using $f_\mathcal{S}$. Also, $\Pi(\mathcal{G}_S,\mathcal{G}_K, p, a_1, a_2)$ be the AnsProlog program for WiSCR algorithm and \textsc{AnswerFinder} be the python procedure defined in Section 4.2.5. Then, the WiSCR algorithm produces an answer $x$ to the input WSC problem iff 	$\Pi(\mathcal{G}_S,\mathcal{G}_K, p, a_1, a_2)$ and \textsc{AnswerFinder} together output the answer $x$.
\end{customthm}

\begin{proof}
(i) Given the ASP encoding of a graphical representation of the sequence of sentences in a WSC problem, the rules \texttt{s11}-\texttt{s13} extract a subgraph such that it contains only the non class nodes from the original graphs and the edges which connect them. The nodes of the subgraph are represented using the predicate \texttt{node\_G\_s} and the edges are represented using the binary predicate \texttt{edge\_G\_S}. In other words, the rules \texttt{s11}-\texttt{s13} implement the Step 1 of the WiSCR algorithm.

\noindent
(ii) Similar to (i) the rules \texttt{s21}-\texttt{s23} implement the Step 2 of the WiSCR algorithm.

\noindent
(iii) Given the outputs of the rules \texttt{s11}-\texttt{s23}, and the ASP representations of the sequence of sentences in a WSC problem and a knowledge, the rules \texttt{s31}-\texttt{s37} first generate all possible matching pairs corresponding to the nodes of the graph of WSC sentences and the graph of knowledge, then a set of constraints are used to remove the possibilities which do not represent an isomorphism between the subgraphs of WSC sentences and knowledge. In other words, the rules \texttt{s31}-\texttt{s37} implement the Step 3 of the WiSCR algorithm.

\noindent
(iv) Given an output of the rules \texttt{s31}-\texttt{s37}, and the ASP representations of the sequence of sentences in a WSC problem and a knowledge, the rules \texttt{s41}-\texttt{s49}  

\begin{itemize}[wide, nosep, labelindent = 0pt, topsep = 0ex]
    \item output $ans(a_1)$ if $matches(p,n_1)$, $matches(a_1,n_2)$ are true and $has\_k(n_1,"is\_same\_a\-s",n_2)$ or $has\_k(n_2,``is\_same\_as'',n_1)$ is true, and there does not exist an $n$ and an $x$ ($x\neq a_1$) such that $matches(x,n)$ is true and either $has\_k(n_1,"is\_same\_as",n)$ or $has\_k(n,"is\_same\_as",n_1)$ is true.
    
    \item output $ans(a_2)$ if $matches(p,n_1)$, $matches(a_2,n_2)$ are true and $has\_k(n_1,"is\_same\_a\-s",n_2)$ or $has\_k(n_2,``is\_same\_as'',n_1)$ is true, and there does not exist an $n$ and an $x$ ($x\neq a_2$) such that $matches(x,n)$ is true and either $has\_k(n_1,"is\_same\_as",n)$ or $has\_k(n,"is\_same\_as",n_1)$ is true.

    \item do not satisfy the current interpretation 
\end{itemize}

If more than one answers are produced and all of them correspond to one answer then \textsc{AnswerFinder} module outputs that as the final answer. Otherwise if zero answers are produced, or not all among the multiple answers correspond to a common answer then the \textsc{AnswerFinder} module does not output anything.

In other words, the rules \texttt{s41}-\texttt{s49} and the \textsc{AnswerFinder} module together implement the step 4 of the WiSCR algorithm.

By (i), (ii), (iii) and (iv), the WiSCR algorithm produces an answer $x$ to the input WSC problem iff 	$\Pi(\mathcal{G}_S,\mathcal{G}_K, p, a_1, a_2)$ and \textsc{AnswerFinder} together output the answer $x$.

\noindent The theorem is proved.

\end{proof}

\label{lastpage}
\end{document}